\newenvironment{internallinenumbers}{}{}
\DeclareMathOperator{\trace}{trace}
\DeclareMathOperator{\rank}{rank}
\NewDocumentCommand{\bigO}{g}{\mathcal{O}\IfValueT{#1}{(#1)}}
\NewDocumentCommand{\qtext}{O{\quad}O{#1}m}{#1\text{#3}#2}
\NewDocumentCommand{\Real}{o}{\IfValueTF{#1}{\Rmsiz{#1}}{\mathbb{R}}}
\NewDocumentCommand{\Tr}{s}{\IfBooleanTF{#1}{\vphantom{\intercal}}{\intercal}}
\NewDocumentCommand{\Vc}{O{} m !g t' t"}{%
  \bm{#1{\mathbf{\MakeLowercase{#2}}}}%
  \IfValueT{#3}{_{#3}}%
  \IfBooleanTF{#4}{^{\Tr}}{%
    \IfBooleanT{#5}{^{\Tr*}}}%
}
\NewDocumentCommand{\Mx}{O{} m !g t' t"}{
  \bm{#1{\mathbf{\MakeUppercase{#2}}}}%
  \IfValueT{#3}{_{#3}}%
  \IfBooleanTF{#4}{^{\Tr}}{%
    \IfBooleanT{#5}{^{\Tr*}}}%
}
\NewDocumentCommand{\Tn}{O{} m !g}{%
  \boldsymbol{#1{\mathscr{\MakeUppercase{#2}}}}%
  \IfValueT{#3}{_{#3}}%
}
\NewDocumentCommand{\Tm}{O{} m O{} !g t' t"}{
  \bm{#1{\mathbf{\MakeUppercase{#2}}}}_{\prn#3(#4)}%
  \IfBooleanTF{#5}{^{\Tr}}{%
    \IfBooleanT{#6}{^{\Tr*}}}%
}
\NewDocumentCommand{\megaLRexp}{mmmmmmmm}{%
  \IfBooleanTF{#1}{%
    \IfBooleanTF{#2}{%
      \IfBooleanTF{#3}{%
        \IfBooleanTF{#4}%
        {\Biggl#6 #8 \Biggr#7}%
        {\biggl#6 #8 \biggr#7}}%
      {\Bigl#6 #8 \Bigr#7}}%
    {\bigl#6 #8 \bigr#7}}%
  {\IfBooleanTF{#5}
    {\left#6 #8 \right#7}
    {#6 #8 #7}}%
}
\NewDocumentCommand{\prn}{ ssss t+ g d() t'}
{%
  \IfValueT{#6}{\megaLRexp{#1}{#2}{#3}{#4}{#5}{(}{)}{#6}}%
  \IfValueT{#7}{\megaLRexp{#1}{#2}{#3}{#4}{#5}{(}{)}{#7}}%
  \IfBooleanT{#8}{^{\Tr}}%
}
\NewDocumentCommand{\abs}{ ssss t+ g d||}
{%
  \IfValueT{#6}{\megaLRexp{#1}{#2}{#3}{#4}{#5}{\vert}{\vert}{#6}}%
  \IfValueT{#7}{\megaLRexp{#1}{#2}{#3}{#4}{#5}{\vert}{\vert}{#7}}%
}
\NewDocumentCommand{\crly}{ ssss t+ g }
{%
  \IfValueT{#6}{\megaLRexp{#1}{#2}{#3}{#4}{#5}{\{}{\}}{#6}}%
}
\NewDocumentCommand{\nrm}{ ssss t+ g }
{%
  \IfValueT{#6}{\megaLRexp{#1}{#2}{#3}{#4}{#5}{\|}{\|}{#6}}%
}
\NewDocumentCommand{\fnrm}{ ssss t+ g }
{%
  \IfValueT{#6}{\megaLRexp{#1}{#2}{#3}{#4}{#5}{\|}{\|_{F}}{#6}}%
}
\NewDocumentCommand{\sqr}{ ssss t+ g d[] }
{%
  \IfValueT{#6}{\megaLRexp{#1}{#2}{#3}{#4}{#5}{\lbrack}{\rbrack}{#6}}%
  \IfValueT{#7}{\megaLRexp{#1}{#2}{#3}{#4}{#5}{\lbrack}{\rbrack}{#7}}%
}
\NewDocumentCommand{\dsqr}{ ssss t+ g d[] }
{%
  \IfValueT{#6}{\megaLRexp{#1}{#2}{#3}{#4}{#5}{\llbracket}{\rrbracket}{#6}}%
  \IfValueT{#7}{\megaLRexp{#1}{#2}{#3}{#4}{#5}{\llbracket}{\rrbracket}{#7}}%
}
\NewDocumentCommand{\ceil}{ ssss t+ m }
{%
  \megaLRexp{#1}{#2}{#3}{#4}{#5}{\lceil}{\rceil}{#6}
}
\NewDocumentCommand{\floor}{ ssss t+ m }
{%
  \megaLRexp{#1}{#2}{#3}{#4}{#5}{\lfloor}{\rfloor}{#6}
}
\NewDocumentCommand{\dsub}{m m t_ m}{#1_{#2_{#4}}}
\NewDocumentCommand{\tttMlist}{m m m m m m m}{%
  #1_{1} #2 %
  \IfBooleanF{#5}{#1_{2} #2} %
  \IfBooleanTF{#6}{ %
    #1_{3} %
    \IfBooleanT{#7}{#2 #1_{4} } %
  }{ %
    \IfBooleanTF{#3}{\cdots}{\dots} #2 #1_{#4} %
  }%
}
\NewDocumentCommand{\tttRMlist}{m m m m m m m}{%
  \IfBooleanTF{#6}%
  {\IfBooleanT{#7}{#1_{4} #2} #1_{3} #2 #1_{2} #2 #1_{1}}%
  {#1_{#4} \IfBooleanF{#5}{#2 #1_{#4-1}} #2 \cdots  #2 #1_{1}}%
}
\NewDocumentCommand{\tttSlist}{m m m m m m}{%
  #1_{1} %
  #2 \IfBooleanTF{#3}{\cdots}{\dots}
  #2 #1_{#5-1} %
  \IfValueT{#6}{#2 #6} %
  #2 #1_{#5+1} %
  #2 \IfBooleanTF{#3}{\cdots}{\dots}
  #2 #1_{#4} %
}
\NewDocumentCommand{\tttRSlist}{m m m m m m}{%
  #1_{#4} %
  #2 \IfBooleanTF{#3}{\cdots}{\dots}
  #2 #1_{#5+1} %
  \IfValueT{#6}{#2 #6} %
  #2 #1_{#5-1} %
  #2 \IfBooleanTF{#3}{\cdots}{\dots}
  #2 #1_{1} %
}
\NewDocumentCommand{\miwc}{s s t! O{i} O{d}}{%
  \tttMlist{#4}{,}{\BooleanFalse}{#5}{#3}{#1}{#2}%
}
\NewDocumentCommand{\siwc}{O{k} O{i} O{d} g}{%
  \tttSlist{#2}{,}{\BooleanFalse}{#3}{#1}{#4}%
}
\NewDocumentCommand{\minc}{s s t! O{i} O{d}}{%
  \tttMlist{#4}{}{\BooleanTrue}{#5}{#3}{#1}{#2}%
}
\NewDocumentCommand{\sinc}{O{k} O{i} O{d} g}{%
  \tttSlist{#2}{}{\BooleanTrue}{#3}{#1}{#4}%
}
\NewDocumentCommand{\tttTsze}{>{\SplitArgument{3}{,}}m}{\tttTszeFour#1} %
\NewDocumentCommand{\tttTszeFour}{m m m m}{#1 \IfValueT{#2}{\times #2} \IfValueT{#3}{\times #3} \IfValueT{#4}{\times #4}} %
\NewDocumentCommand{\msiz}{s s t! O{n} O{d} !g !g}{%
  \IfValueTF{#6}%
  {\tttTsze{#6}\IfValueT{#7}{\times \cdots \times #7}}%
  {#4_1 \IfBooleanF{#3}{\times #4_2}%
    \IfBooleanTF{#1}{\times #4_3 \IfBooleanT{#2}{\times #4_4}}{\times \cdots \times #4_{#5}}}%
}
\NewDocumentCommand{\Rmsiz}{s s t! O{n} O{d} !g !g}{%
  \mathbb{R}^{\IfValueTF{#6}%
  {\tttTsze{#6}\IfValueT{#7}{\times \cdots \times #7}}%
  {#4_1 \IfBooleanF{#3}{\times #4_2}%
    \IfBooleanTF{#1}{\times #4_3 \IfBooleanT{#2}{\times #4_4}}{\times \cdots \times #4_{#5}}}}%
}
\NewDocumentCommand{\ssiz}{O{k} O{n} O{d} g}{%
  \tttSlist{#2}{\times}{\BooleanTrue}{#3}{#1}{#4}%
}
\NewDocumentCommand{\Rssiz}{O{k} O{n} O{d} g}{%
  \mathbb{R}^{\tttSlist{#2}{\times}{\BooleanTrue}{#3}{#1}{#4}}%
}
\NewDocumentCommand{\tttMdom}{>{\SplitArgument{3}{,}}m}{\tttMdomFour#1} %
\NewDocumentCommand{\tttMdomFour}{m m m m}{[#1] \IfValueT{#2}{\otimes [#2]} \IfValueT{#3}{\otimes [#3]} \IfValueT{#4}{\otimes [#4]}} %
\NewDocumentCommand{\mdom}{s s t! O{n} O{d} !g !g}
{
  \IfValueTF{#6}
  {\tttMdom{#6}\IfValueT{#7}{\otimes \cdots \otimes [#7]}}
  {[#4_1] \IfBooleanF{#3}{\otimes [#4_2]}
    \IfBooleanTF{#1}{\otimes [#4_3] \IfBooleanT{#2}{\otimes [#4_4]}}{\otimes \cdots \otimes [#4_{#5}]}}
}
\NewDocumentCommand{\sdom}{O{k} O{n} O{d} g}{%
  [#2_{1}] %
  \otimes \cdots
  \otimes [#2_{#1-1}] %
  \IfValueT{#4}{\otimes [#4]} %
  \otimes [#2_{#1+1}] %
  \otimes \cdots
  \otimes [#2_{#3}] %
}
\NewDocumentCommand{\tttMsummand}{>{\SplitArgument{1}{/}}m}{\tttMsummandTwo#1} %
\NewDocumentCommand{\tttMsummandTwo}{mm}{\IfValueTF{#2}{\sum_{#1=1}^{#2}}{\sum_{#1}}} %
\NewDocumentCommand{\msum}{s s t! O{i} O{n} O{d} >{\SplitList{,}}g !g}{%
  \IfValueTF{#7}%
  {\ProcessList{#7}{\tttMsummand}\IfValueT{#8}{\cdots\tttMsummand{#8}}}%
  {%
    \tttMsummand{#4_1/#5_1}%
    \IfBooleanF{#3}{\tttMsummand{#4_2/#5_2}}%
    \IfBooleanTF{#1}%
    {\tttMsummand{#4_3/#5_3} \IfBooleanT{#2}{\tttMsummand{#4_4/#5_4}}} %
    {\cdots\tttMsummand{#4_{#6}/#5_{#6}}} %
  }%
}
\NewDocumentCommand{\Ev}{ ssss t+ g d() d[] }
{\mathbb{E}%
  \IfValueT{#6}{\megaLRexp{#1}{#2}{#3}{#4}{#5}{[}{]}{#6}}%
  \IfValueT{#7}{\megaLRexp{#1}{#2}{#3}{#4}{#5}{(}{)}{#7}}%
  \IfValueT{#8}{\megaLRexp{#1}{#2}{#3}{#4}{#5}{[}{]}{#8}}%
}
\NewDocumentCommand{\Var}{ ssss t+ g d[] }
{\text{Var}%
  \IfValueT{#6}{\megaLRexp{#1}{#2}{#3}{#4}{#5}{[}{]}{#6}}%
  \IfValueT{#7}{\megaLRexp{#1}{#2}{#3}{#4}{#5}{[}{]}{#7}}%
}
\DeclareDocumentCommand{\Pr}{ ssss t+ g d() d[] D<>{\mspace{1mu}}}
{\mathbb{P}%
  \IfValueT{#6}{\megaLRexp{#1}{#2}{#3}{#4}{#5}{\{}{\}}{#9#6#9}}%
  \IfValueT{#7}{\megaLRexp{#1}{#2}{#3}{#4}{#5}{(}{)}{#9#7#9}}%
  \IfValueT{#8}{\megaLRexp{#1}{#2}{#3}{#4}{#5}{[}{]}{#9#8#9}}%
}
\NewDocumentCommand{\iid}{}{i.i.d.\@\xspace}
\pgfplotsset{
  plot style/.style={every axis plot/.append style={#1}},
  bar style/.style={every axis plot post/.append style={#1}},
  axis style/.style={every axis/.append style={#1}},
  mark style/.style={every mark/.append style={#1}},
  numformat/.style={/pgf/number format/#1},
  xnumformat/.style={xticklabel style={numformat/.list={#1}}},
  ynumformat/.style={yticklabel style={numformat/.list={#1}}},
  lognumformat/.style={log plot exponent style/.append style=
    {numformat/.list={#1}}},
  scatter classes/.style={only marks,scatter,
    scatter src=explicit symbolic,scatter/classes={#1}},
  axis style={
    xlabel near ticks, ylabel near ticks, %
    every axis title shift=3pt, %
    scale only axis, %
    cycle list={}, %
  },
  plot style={mark style={solid}},
  table/col sep=comma, %
}
\newtheorem{theorem}{Theorem}[section]
\newtheorem{corollary}[theorem]{Corollary}
\newtheorem{lemma}[theorem]{Lemma}
\newtheorem{prop}[theorem]{Proposition}
\crefname{prop}{Proposition}{Propositions}
\Crefname{prop}{Proposition}{Propositions}
\theoremstyle{remark}
\newtheorem{remark}[theorem]{Remark}
\crefname{remark}{Remark}{Remarks}
\Crefname{remark}{Remark}{Remarks}
\newtheorem{assump}{Assumption}
\crefname{assump}{Assumption}{Assumptions}
\Crefname{assump}{Assumption}{Assumptions}
\newcounter{asseq}
\newcounter{temp}
\NewDocumentEnvironment{assumption}{o}{
  \IfValueTF{#1}{\begin{assump}[#1]}{\begin{assump}}
  \setcounter{temp}{\value{equation}}
  \setcounter{equation}{\value{asseq}}
   
}{
  \setcounter{asseq}{0}
  \setcounter{equation}{\value{temp}}  
  \end{assump}
}
\NewDocumentCommand{\vx}{}{\Vc{x}}
\NewDocumentCommand{\I}{}{\Mx{I}}
\NewDocumentCommand{\X}{}{\Mx{X}}
\NewDocumentCommand{\Y}{}{\Mx{Y}}
\DeclareDocumentCommand{\Xi}{}{\Mx{X}{0}} %
\NewDocumentCommand{\Yi}{}{\Mx{Y}{0}} %
\NewDocumentCommand{\RX}{}{\Mx{\Phi}{1}} %
\NewDocumentCommand{\RY}{}{\Mx{\Phi}{2}} %
\NewDocumentCommand{\Xtp}{m}{\Mx{X}{t\IfBooleanT{#1}{+1}}}
\NewDocumentCommand{\Ytp}{m}{\Mx{Y}{t\IfBooleanT{#1}{+1}}}
\NewDocumentCommand{\Ztp}{m}{\Mx{Z}{t\IfBooleanT{#1}{+1}}}
\NewDocumentCommand{\Xt}{t+}{\Xtp{#1}}
\NewDocumentCommand{\Yt}{t+}{\Ytp{#1}}
\NewDocumentCommand{\Zt}{t+}{\Ztp{#1}}
\NewDocumentCommand{\Z}{}{\Mx{Z}}
\NewDocumentCommand{\A}{}{\Mx{A}}
\NewDocumentCommand{\Ft}{t+t+}{f\prn(\Xtp{#1},\Ytp{#2})}
\NewDocumentCommand{\D}{m}{\nabla_{\!\scaleobj{0.6}{#1}}}
\NewDocumentCommand{\DXFt}{t+t+}{\D{\X} f\prn(\Xtp{#1},\Ytp{#2})}
\NewDocumentCommand{\DYFt}{t+t+}{\D{\Y} f\prn(\Xtp{#1},\Ytp{#2})}
\NewDocumentCommand{\DDXFt}{t+t+}{\D{\X}^2 f\prn(\Xtp{#1},\Ytp{#2})}
\NewDocumentCommand{\DDYFt}{t+t+}{\D{\Y}^2 f\prn(\Xtp{#1},\Ytp{#2})}
\NewDocumentCommand{\Normal}{}{\mathcal{N}}
\NewDocumentCommand{\ft}{t+O{t}}{f(\X_{#2\IfBooleanT{#1}{+1}},\Y_{#2})}
\NewDocumentCommand{\Gt}{t+O{t}}{\IfBooleanTF{#1}{\D{\Y}}{\D{\X}}f(\X_{#2\IfBooleanT{#1}{+1}},\Y_{#2})}
\NewDocumentCommand{\Rt}{t+t+O{t}}{%
  \IfBooleanTF{#1}%
  {\IfBooleanTF{#2}%
    {\Mx[\hat]{R}}%
    {\Mx[\bar]{R}}%
  }%
  {\Mx{R}}%
  {#3}%
}
\NewDocumentCommand{\Tft}{t+t+O{t}}{f(\X_{#3\IfBooleanT{#1}{+1}},\Y_{#3\IfBooleanT{#2}{+1}},\Zt)}
\NewDocumentCommand{\TGt}{t+t+O{t}}{%
  \IfBooleanTF{#1}{\IfBooleanTF{#2}{\D{\Z}}{\D{\Y}}}{\D{\X}}%
  f(\X_{#3\IfBooleanT{#1}{+1}},\Y_{#3\IfBooleanT{#2}{+1}},\Zt)%
}
\DeclareMathOperator{\Prob}{Prob}
\title{Convergence of Alternating Gradient Descent for Matrix Factorization}
\author{
  Rachel~Ward
  \\
  University of Texas\\
  Austin, TX \\
  \texttt{rward@math.utexas.edu} \\
  \And
  Tamara G.~Kolda \\
  MathSci.ai \\
  Dublin, CA \\
  \texttt{tammy.kolda@mathsci.ai} \\
}
\begin{document}

\maketitle

\begin{abstract}
  We consider alternating gradient descent (AGD) with fixed step size applied to the asymmetric matrix factorization objective.
  We show that, for a rank-$r$ matrix $\A \in \Real[m,n]$,
  $T = C \prn*(\frac{\sigma_1(\A)}{\sigma_r(\A)})^2 \log(1/\epsilon)$
  iterations of alternating gradient descent suffice to reach an $\epsilon$-optimal factorization 
  $\fnrm{ \A - \X{T}" \Y{T}' }^2 \leq \epsilon \fnrm{ \A}^2$   with high probability
  starting from an atypical random initialization. The
  factors have rank $d \geq r$ so that $\X{T}\in\Real[m,d]$ and $\Y{T} \in\Real[n,d]$, and mild overparameterization suffices for the constant  $C$ in the iteration complexity $T$ to be an absolute constant. 
  Experiments suggest that our proposed initialization is not merely of theoretical benefit, but rather significantly improves the convergence rate of gradient descent in practice. Our proof is conceptually simple: a uniform Polyak-\L{}ojasiewicz (PL) inequality and uniform Lipschitz smoothness constant are guaranteed for a sufficient number of iterations, starting from our random initialization.  Our proof method should be useful for extending and simplifying convergence analyses for a broader class of nonconvex low-rank factorization problems.
\end{abstract}

\section{Introduction}
\label{sec:introduction}

This paper focuses on the convergence behavior of alternating gradient descent (AGD) on the low-rank matrix factorization objective
  \begin{equation}
    \label{sqlossgen}
    \min f(\X,\Y) \equiv  \frac{1}{2}\fnrm{ \X \Y' - \A }^2
    \qtext{subject to} \X \in \Real[m,d], \Y \in \Real[n,d].
  \end{equation}
Here, we assume $m,n \gg d \geq r=\rank(\A)$.
While there are a multitude of more efficient algorithms for low-rank matrix approximation, this serves as a simple prototype and special case of more complicated nonlinear optimization problems where gradient descent (or stochastic gradient descent) is the method of choice but not well-understood theoretically.  Such problems  include low-rank tensor factorization using the GCP algorithm descent \cite{hong2020generalized}, a stochastic gradient variant of the GCP algorithm \cite{kolda2020stochastic}, as well as deep learning optimization.

Surprisingly, the convergence behavior of gradient descent for low-rank matrix factorization is still not completely understood, in the sense that there is a large gap between theoretical guarantees and empirical performance. We take a step in closing this gap, providing a sharp linear convergence rate from a simple asymmetric random initialization. 
Precisely, we show that if $\A$ is rank-$r$, then a number of iterations   $T = C { \frac{d}{(\sqrt{d}-\sqrt{r-1})^2} } \frac{\sigma_1^2(\A)}{\sigma^2_r(\A)} \log(1/\epsilon)$ suffices to obtain an $\epsilon$-optimal factorization with high probability. 
Here, $\sigma_k(\A)$ denotes the $k$th singular value of $\A$ { and $C>0$ is a numerical constant}.
To the authors' knowledge, this improves on the state-of-art convergence result in the literature \cite{jiang2022algorithmic}, which provides an iteration complexity $T = C \prn**( \prn*(\frac{\sigma_1(\A)}{\sigma_r(\A)})^3 \log(1/\epsilon) )$ for gradient descent to reach an $\epsilon$-approximate rank-$r$ approximation\footnote{We note that our results are not precisely directly comparable as our analysis is for alternating gradient descent whereas existing results hold for gradient descent. However, empirically, alternating and non-alternating gradient descent exhibit similar behavior across many experiments}.  

Our improved convergence analysis is facilitated by our choice of initialization of $\X_0, \Y_0$, which appears to be new in the literature and is distinct from the standard Gaussian initialization.  Specifically, for $\RX$ and $\RY$ independent Gaussian matrices, we consider an ``unbalanced'' random initialization of the form 
$\Xi \sim \frac{1}{\sqrt{\eta}}\A \RX$  and $\Yi \sim \sqrt{\eta} \RY$, where $\eta > 0$ is the step-size used in (alternating) gradient descent. 
A crucial feature of this initialization is that the columns of  $\Xi$ are in the column span of $\A,$ and thus by invariance of the alternating gradient update steps, the columns of $\Xt$ remain in the column span of $\A$ throughout the optimization.  Because of this, a positive $r$th singular value of $\Xt$ provides a Polyak-\L{}ojasiewicz (PL) inequality for the region of the loss landscape on which the trajectory of alternating gradient descent is guaranteed to be confined to, even though the matrix factorization loss function $f$ in \cref{sqlossgen} does not satisfy a PL-inequality globally.

By Gaussian concentration, the pseudo-condition numbers $\frac{\sigma_{1}(\Xi)}{\sigma_r(\Xi)} \sim \frac{\sigma_{1}(\A)}{\sigma_r(\A)}$ are comparable with high probability\footnote{The pseudo-condition number, $\frac{\sigma_{1}(\A)}{\sigma_r(\A)}$, is equivalent to and sometimes discussed as the product of the product of the spectral norms of the matrix and its pseudoinverse, i.e., $\nrm{\A} \nrm{\A^{\!\dagger}}$}; for a range of step-size $\eta$ and the unbalanced initialization $\Xi \sim \frac{1}{\sqrt{\eta}}\A \RX$  and $\Yi \sim \sqrt{\eta} \RY$, we show that $\frac{\sigma_{1}(\Xt)}{\sigma_r(\Xt)}$ is guaranteed to remain comparable to $\frac{\sigma_{1}(\A)}{\sigma_r(\A)}$ for a sufficiently large number of iterations $t$ that we are guaranteed a linear rate of convergence with rate $\prn*(\frac{\sigma_{r}(\Xi)}{\sigma_1(\Xi)})^2$.

The unbalanced initialization, with the particular re-scaling of $\Xi$ and $\Yi$ by $\frac{1}{\sqrt{\eta}}$ and $\sqrt{\eta},$ respectively, is not a theoretical artifact but crucial in practice for achieving a faster convergence rate compared to a standard Gaussian initialization, as illustrated in \cref{fig:exp4}. Also in  \cref{fig:exp4}, we compare empirical convergence rates to the theoretical rates derived in \cref{thm:main1} below, indicating that our rates are sharp and made possible only by our particular choice of initialization.  {The derived convergence rate of (alternating) gradient descent starting from the particular asymmetric initialization where the columns of  $\Xi$ are in the column span of $\A$ can be explained intuitively as follows: in this regime, the $\Xt$ updates remain sufficiently small  with respect to the initial scale of $\Xi$, while the $\Yt$ updates change sufficiently quickly with respect to the initial scale $\Yi$, that the resulting alternating gradient descent dynamics on matrix factorization follow the dynamics of gradient descent on the linear regression problem $\text{min } g(\Y) = \|  \Xi \Y^T - \A \|_{F}^2$ where $\Xi$ is held fixed at its initialization.}

We acknowledge that our unbalanced initialization of $\Xi$ and $\Yi$ is different from the standard Gaussian random initialization in neural network training, which is a leading motivation for studying gradient descent as an algorithm for matrix factorization.  The unbalanced initialization should not be viewed as at odds with the implicit bias of gradient descent towards a balanced factorization \cite{cohen2021gradient, wang2021large, ahn2022learning, chen2022gradient}, which have been linked to better generalization performance in various neural network settings. An interesting direction of future research is to compare the properties of the factorizations obtained by (alternating) gradient descent, starting from various (balanced versus unbalanced) initializations.

\begin{figure}[!ht]
  \centering
  \includegraphics{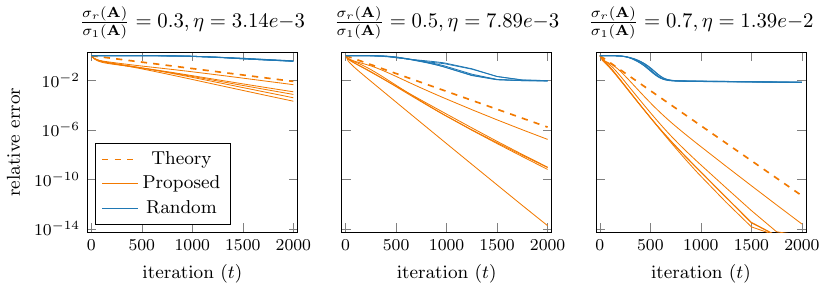}
  \caption{Alternating gradient descent for $\mathbf{A} \in \mathbb{R}^{100 \times 100}$ with $\text{rank}(\mathbf{A})=5$ and factors of size $100 \times 10$, The plot shows five runs each of our proposed initialization and compared with the standard random initialization. The title of each plot shows the condition and step length.}
  \label{fig:exp4}
\end{figure}

\section{Preliminaries}
\label{sec:matrix-case}

Throughout, for an $m \times n$ matrix ${\bf M}$,  $\| {\bf M} \|$ refers to the spectral norm and $\| {\bf M} \|_F$ refers to the Frobenius norm.

Consider the square loss applied to the matrix factorization problem \cref{sqlossgen}.
The gradients are
\begin{subequations}
  \label{eq:grad}
  \begin{align}
    \label{eq:grad-X}
    \D{\X} f(\X,\Y)  & = (\X\Y' - \A) \Y, \\
    \label{eq:grad-Y}
    \D{\Y} f(\X,\Y)  &= \prn(\X\Y' - \A)' \X.
  \end{align}  
\end{subequations}

We will analyze alternating gradient descent, defined as follows.
\begin{assumption}[Alternating Gradient Descent] \label{ass:agd}%
  For fixed stepsize $\eta > 0$ and initial condition $(\Xi, \Yi)$, the update is
  \begin{align}
    \label{eq:gd-Xt}
    \Xt+ &= \Xt - \eta \D{\X} f(\Xt,\Yt), \\
    \label{eq:gd-Yt}
    \Yt+ &= \Yt - \eta \D{\Y} f(\Xt+,\Yt).
  \end{align}%
\end{assumption}

We assume that the iterations are initialized in an asymmetric way, which depends on the step size $\eta$ and assumes a known upper bound on the spectral norm of $\A$.
The matrix factorization is of rank $d > r$, and we also make assumptions
about the relationship of $d$, $r$, and quantities $s$, $\beta$, and $\delta$ that will impact
the bounds on the probability of finding and $\epsilon$-optimal factorization.

\begin{assumption}[Initialization and key quantities]\label{ass:init}
  Draw random matrices $\RX,\RY \in \Real[n,d]$  with \iid $\Normal(0,1/d)$ and $\Normal(0,1/n)$ entries, respectively.
  Fix $C \geq 1$, $\nu < 1/3,$ and $D \leq \frac{C}{9}\nu,$ and let 
  \begin{equation}
    \Xi = \frac{1}{\eta^{1/2} \, C \, \sigma_1(\A)} \, \A \RX,
    \qtext{and} 
    \Yi = \eta^{1/2} \, D \, \sigma_1(\A) \, \RY.    
  \end{equation}
  The factor matrices each have $d$  columns with $r \leq d \leq n$.
  
  For $\tau > 0$, define
  \begin{equation}\label{eq:rho}
    \rho =  \tau \left( 1 - \frac{\sqrt{r-1}}{\sqrt{d}} \right)
  \end{equation}
  The number of iterations for convergence to $\epsilon$-optimal factorization
  will ultimately be shown to depend on
  \begin{equation}
    \label{b:define}
    \beta = \frac{ \rho^2 \sigma_r^2(\A)}{C^2 \sigma_1^2(\A)}.
  \end{equation}
  The probability of finding this $\epsilon$-optimal
  factorization will depend on
  \begin{equation}\label{eq:delta}
    \delta =  (C_1 \tau)^{d-r+1} + e^{-C_2 d} + e^{-r/2} + e^{-d/2} + {2 e^{-d}}
  \end{equation}
  where $C_1, C_2 > 0$ are the universal constants in \cref{prop:smallest}.
\end{assumption}

Observe that the initialization of $\Xi$ ensures its columns are in the column span of $\A$.

\begin{remark}\label{rmk:f0}
The quantity $\ft[0]$ does not depend on the step size $\eta$ or $\sigma_1(\A)$ in \cref{ass:init} since
\begin{displaymath}
 \Xi\Yi' - \A = \A\prn+(\frac{D}{C}\RX\RY^{\Tr} - \I).
\end{displaymath}  
\end{remark}

\section{Main results}

Our first main result gives a sharp guarantee on the number of iterations necessary for alternating gradient descent to be guaranteed to produce an $\epsilon$-optimal factorization. 
\begin{theorem}[Main result, informal]
\label{thm:main1}
For a rank-$r$ matrix $\A \in \mathbb{R}^{m \times n}$, set $d \geq r$ and
consider $\X_0, \Y_0$ randomly initialized as in \cref{ass:init}.
For any $\epsilon > 0$, there is an explicit step-size $\eta = \eta(\epsilon) > 0$ for
alternating gradient descent as in \cref{ass:agd} such that
\begin{displaymath}
  \fnrm{ \A - \X{T} \Y{T}' }^2 \leq \epsilon
  \qtext{for all}
  T \geq C  \frac{\sigma^2_1(\A)}{\sigma_r^2(\A)} \frac{1}{\rho^2} \log{\frac{\fnrm{ \A }^2}{\epsilon}}   
\end{displaymath}
with probability $1-\delta$ with respect to the draw of $\X_0$ and $\Y_0$
where $\delta$ is defined in \cref{eq:delta}.
Here, $C > 0$ is an explicit numerical constant. 
\end{theorem}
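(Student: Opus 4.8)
The plan is to run the standard ``uniform Polyak--\L ojasiewicz (PL) inequality $+$ uniform smoothness $\Rightarrow$ linear convergence'' argument, with essentially all the difficulty pushed into showing that the relevant constants survive along the whole trajectory. I would begin from two structural facts. By \cref{eq:grad-X} the $\X$-step rewrites as $\Xt+=\Xt(\I-\eta\Yt'\Yt)+\eta\A\Yt$, so an induction from \cref{ass:init} shows every $\Xt$ — hence every residual $\Xt\Yt'-\A$ and $\Xt+\Yt'-\A$ — has columns in the column space of $\A$. Taking the compact SVD $\A=\Mx{u}\bm\Sigma\Mx{v}'$ with $\Mx{u}\in\Real[m,r]$ and factoring $\Xt+=\Mx{u}\Mx{g}$ (legitimate since $\mathrm{col}(\Xt+)\subseteq\mathrm{col}(\A)=\mathrm{col}(\Mx{u})$) gives the PL-type bound
\[
  \fnrm{\D{\Y}f(\Xt+,\Yt)}^{2}=\fnrm{(\Xt+\Yt'-\A)'\Xt+}^{2}\ \ge\ 2\,\sigma_r^2(\Xt+)\,f(\Xt+,\Yt)
\]
whenever $\sigma_r(\Xt+)>0$ — the inequality uses $\Mx{g}\Mx{g}'\succeq\sigma_r^2(\Xt+)\I$, that restriction to $\mathrm{col}(\A)$ is an isometry, and $\fnrm{\cdot}^2=2f$ — with $f^\star=0$ because $\A$ is rank $r<d$. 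Since $f(\Xt+,\cdot)$ is quadratic with Hessian of operator norm $\sigma_1^2(\Xt+)$, the descent lemma gives $f(\Xt+,\Yt+)\le\bigl(1-\eta\sigma_r^2(\Xt+)\bigr)f(\Xt+,\Yt)$ provided $\eta\sigma_1^2(\Xt+)\le1$; and since $f(\cdot,\Yt)$ is quadratic with Hessian of operator norm $\sigma_1^2(\Yt)$, the $\X$-half-step does not increase $f$ provided $\eta\sigma_1^2(\Yt)\le2$. Composing, $f(\Xt+,\Yt+)\le\bigl(1-\eta\sigma_r^2(\Xt+)\bigr)f(\Xt,\Yt)$.

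Next I would set up a bootstrap. Let $\mathcal E_t$ be the event ``for all $s\le t$: $\tfrac12\sigma_r(\Xi)\le\sigma_r(\X_s)$, $\sigma_1(\X_s)\le2\sigma_1(\Xi)$, and $\sigma_1(\Y_s)\le K\sigma_1(\Yi)$'', for an absolute $K\ge1$ fixed later. The decisive feature of \cref{ass:init} is that the step size cancels: $\eta\,\sigma_r^2(\Xi)=\sigma_r^2(\A\RX)/\bigl(C^2\sigma_1^2(\A)\bigr)\ge\beta$ on the event $\sigma_r(\A\RX)\ge\rho\,\sigma_r(\A)$, with $\beta$ as in \cref{b:define}. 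So on $\mathcal E_t$ the one-step inequality upgrades to the $\eta$-free contraction $f(\X_{s+1},\Y_{s+1})\le(1-\tfrac14\beta)f(\X_s,\Y_s)$ for $s\le t$, whence $f(\X_s,\Y_s)\le f_0(1-\tfrac14\beta)^s$ for $s\le t+1$. Feeding this back makes each increment geometrically small — $\nrm{\Xt+-\Xt}\le\eta\fnrm{\Xt\Yt'-\A}\,\nrm{\Yt}\le2\eta\sqrt{f_0}(1-\tfrac14\beta)^{t/2}K\sigma_1(\Yi)$, and $\nrm{\Yt+-\Yt}\le4\eta\sqrt{f_0}(1-\tfrac14\beta)^{t/2}\sigma_1(\Xi)$ — so summing the geometric series gives, \emph{uniformly in $t$}, $\nrm{\X_{t+1}-\Xi}\le(c\eta/\beta)\sqrt{f_0}\,K\sigma_1(\Yi)$ and $\nrm{\Y_{t+1}-\Yi}\le(c\eta/\beta)\sqrt{f_0}\,\sigma_1(\Xi)$ for an absolute $c$. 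By Weyl's inequality this propagates $\mathcal E_t\Rightarrow\mathcal E_{t+1}$ once the two drifts are at most $\tfrac12\sigma_r(\Xi)$ and $(K-1)\sigma_1(\Yi)$ respectively and $\eta\sigma_1^2(\X_{t+1})\le1$, $\eta\sigma_1^2(\Y_t)\le2$; the base case $\mathcal E_0$ is trivial.

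It then remains to check these conditions can be met. With $\A=\Mx{u}\bm\Sigma\Mx{v}'$, the matrix $\Mx{v}'\RX\in\Real[r,d]$ and $\RY\in\Real[n,d]$ are Gaussian, so Davidson--Szarek-type tail bounds (with $t\leftarrow\sqrt s,\sqrt r,\sqrt d$ respectively, using $\sqrt r+\sqrt s<\sqrt d$ and $r\le d\le n$) give, with probability at least $1-e^{-s/2}-e^{-r/2}-e^{-d/2}=1-\delta$, that $\sigma_r(\Mx{v}'\RX)\ge\rho$, $\sigma_1(\Mx{v}'\RX)\le3$ and $\sigma_1(\RY)\le3$; hence $\rho\,\sigma_r(\A)/(\eta^{1/2}C\sigma_1(\A))\le\sigma_r(\Xi)\le\sigma_1(\Xi)\le3/(\eta^{1/2}C)$ and $\sigma_1(\Yi)\le3\eta^{1/2}D\sigma_1(\A)$. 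Thus $\eta\sigma_r^2(\Xi)\ge\beta$ and $\eta\sigma_1^2(\Xi)\le9/C^2\le1$ once $C\ge6$ (which also controls $\eta\sigma_1^2(\X_{t+1})$ on $\mathcal E_t$), and, projecting $\Rt[0]=\A(\tfrac DC\RX\RY'-\I)$ onto the row space of $\A$ and using $D\le\tfrac C9\nu$, $f_0=\tfrac12\fnrm{\Rt[0]}^2\le\tfrac12(1+\nu)^2\fnrm{\A}^2<2\fnrm{\A}^2$. Substituting: the $\Y$-drift condition becomes a lower bound on $K$ of the form $K\ge1+c'\fnrm{\A}C\sigma_1(\A)/(D\rho^2\sigma_r^2(\A))$ (the $\eta^{1/2}$ cancels), which merely fixes $K$; and the $\X$-drift condition together with $\eta\sigma_1^2(\Y_t)\le2$ reduces to $\eta\le\eta_0$ for an explicit $\eta_0$ depending only on $\fnrm{\A}$, $\sigma_1(\A)/\sigma_r(\A)$, $\rho$, $C$, $D$ — in particular not on $\epsilon$. (Replacing the geometric-sum drift bound by the cruder ``$T$ times the largest single increment'' forces instead $\eta\lesssim1/\log(1/\epsilon)$, which is the source of the $\eta=\eta(\epsilon)$ in the statement; the iteration count is the same either way.)

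Finally, on $\bigcap_{t\ge0}\mathcal E_t$ (probability $\ge1-\delta$) the contraction iterates to $f(\X_T,\Y_T)\le f_0(1-\tfrac14\beta)^T\le2\fnrm{\A}^2e^{-\beta T/4}$, so $\fnrm{\A-\X_T\Y_T'}^2=2f(\X_T,\Y_T)\le\epsilon$ once $T\ge\tfrac4\beta\log\tfrac{4\fnrm{\A}^2}{\epsilon}$, i.e.\ once $T\ge C''\,\tfrac{\sigma_1^2(\A)}{\sigma_r^2(\A)}\,\tfrac1{\rho^2}\,\log\tfrac{\fnrm{\A}^2}{\epsilon}$ by \cref{b:define} — exactly the claimed bound. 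I expect the one genuinely delicate step to be the bootstrap: one must certify that the drift of $\Xt$ (pushed by $\nrm{\Yt}=\Theta(\eta^{1/2})$, of order $\eta^{3/2}/\beta$) stays below $\tfrac12\sigma_r(\Xi)=\Theta(\eta^{-1/2})$ while the drift of $\Yt$ (pushed by $\nrm{\Xt}=\Theta(\eta^{-1/2})$) stays within a fixed multiple of $\sigma_1(\Yi)=\Theta(\eta^{1/2})$, and — the non-obvious part — that shrinking $\eta$ to enforce the first does not lengthen the run enough to violate the second; this works precisely because $\eta\sigma_r^2(\Xi)\ge\beta$ makes the contraction rate and the budget $T$ independent of $\eta$ while the increments decay geometrically rather than merely being uniformly bounded.
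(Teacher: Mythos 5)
Your proposal is correct in its essentials and reaches the stated iteration complexity, but it controls the trajectory by a genuinely different mechanism than the paper. The paper never feeds the linear decay of $f$ back into the drift estimate: its \cref{lemma1} gives a telescoping bound $\sum_{t\le T}\fnrm{\D{\X}f(\Xt,\Yt)}^2\le\frac{2}{\eta}f_0$ from norm conditions alone, and \cref{prop:big} then bounds $\nrm{\X{T}-\Xi}\le\sqrt{2T\eta f_0}$ by Cauchy--Schwarz, a bound that grows with $T$ and therefore only certifies the singular-value/PL conditions up to a finite horizon $T_*\sim\beta/(\eta^2f_0)$; this is exactly why the paper must take $\eta=\eta(\epsilon)$ small enough that $\epsilon$-accuracy is reached inside that window (\cref{cor:main}) and must restart the analysis recursively to go further (\cref{thm:big}). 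You instead run a bootstrap in which the already-established contraction $f(\X_{s+1},\Y_{s+1})\le(1-\beta/4)f(\X_s,\Y_s)$ makes the per-step increments geometrically summable, yielding drift bounds of order $\eta\sqrt{f_0}\,\nrm{\Y{0}}/\beta$ and $\eta\sqrt{f_0}\,\nrm{\Xi}/\beta$ uniformly in $t$; this buys a single $\epsilon$-independent step size and trajectory control for all iterations, at the price of a more delicate induction (the order within each step matters: increments up to $t$ give the $\X_{t+1}$ singular-value bounds, which give the contraction at step $t$, which gives the $\Y$-increment at step $t$ --- as you have it, this is not circular). Two small repairs: state the $\Y$-control as the absolute condition $\eta\,\sigma_1^2(\Y{t})\le 2$ (or $\le K^2\eta D^2\sigma_1^2(\A)$) rather than as a multiple of $\sigma_1(\Yi)$, since the initialization analysis only supplies a high-probability upper bound on $\sigma_1(\Yi)$, not a lower bound; and note that your remark about recovering the paper's $\eta(\epsilon)$ via the cruder ``$T$ times the largest increment'' bound is precisely the route the paper takes, so your geometric-series version is the stronger (and arguably cleaner) statement, subsuming the role of the paper's recursive \cref{thm:big}.
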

For more complete theorem statements, see \cref{cor:main} and \cref{thm:big} below.

We highlight a few points below.

\begin{enumerate}

\item The iteration complexity in \cref{thm:main1} is independent of the ambient dimensions $n, m$.  In the edge case $d = r$, $\frac{1}{\rho^2} = O(r^2)$, so the iteration complexity scales quadratically with $r$. With mild multiplicative overparameterization $d = (1+\alpha)r$, $\frac{1}{\rho^2} = \frac{(1+\alpha)}{(\sqrt{1+\alpha} - 1)^2}$, and the iteration complexity is essentially dimension-free.  This is a direct result of the dramatic improvement in the condition number of a $(1+\alpha)r \times r$ Gaussian random matrix compared to the condition number of a square $r \times r$ Gaussian random matrix.

    \item Experiments illustrate that initializing $\Xi$ in the column span of $\A$, and especially re-scaling $\X_0$ and $\Y_0$ by $\frac{1}{\sqrt{\eta}}$ and $\sqrt{\eta}$, respectively, is crucial in practice for improving the convergence rate of gradient descent.  See \cref{fig:exp1,fig:exp2,fig:exp3}.  

    \item The iteration complexity in \cref{thm:main1} is conservative.  In experiments, the convergence rate often follows a dependence on $\frac{\sigma_r(\A)}{\sigma_1(\A)}$ rather than $\frac{\sigma^2_r(\A)}{\sigma^2_1(\A)}$ for the first several iterations.

\end{enumerate}

\subsection{Our contribution and prior work}
The seminal work of Burer and Monteiro \cite{burer2003nonlinear, burer2005local} advocated for the general approach of using simple algorithms such as gradient descent directly applied to low-rank factor matrices for solving non-convex optimization problems with low-rank matrix solutions. Initial theoretical work on gradient descent for low-rank factorization problems such as \cite{zhao2015nonconvex}, \cite{tu2016low}, \cite{zheng2016convergence}, \cite{sanghavi2017local}, \cite{bhojanapalli2016dropping}
did not prove global convergence of gradient descent, but rather 
 local convergence of gradient descent starting from a spectral initialization (that is, an initialization involving SVD computations). In almost all cases, the spectral initialization is the dominant computation, and thus a more global convergence analysis for gradient descent is desirable.

 Global convergence for gradient descent for matrix factorization problems without additional explicit regularization was first derived in the symmetric setting, where $\A \in \mathbb{R}^{n \times n}$ is positive semi-definite, and $f(\X) = \fnrm{ \A - \X \X' }^2$, see for example  \cite{ge2015escaping, jain2017global, chen2019gradient}. 

For overparameterized symmetric matrix factorization, the convergence behavior and implicit bias towards particular solutions for gradient descent with small step-size and from small initialization was analyzed in the work \cite{gunasekar2017implicit, li2018algorithmic, arora2019implicit, chou2020gradient}. 

The paper \cite{ye2021global} initiated a study of gradient descent with fixed step-size in the more challenging setting of \emph{asymmetric} matrix factorization, where $\A \in \mathbb{R}^{m \times n}$ is rank-$r$ and the objective is $\fnrm{ \A - \X \Y' }^2$. This work improved on previous work in the setting of gradient flow and gradient descent with decreasing step-size \cite{du2018algorithmic}.
The paper \cite{ye2021global} proved an iteration complexity of $T = \mathcal{O}\prn*(nd\prn*(\frac{\sigma_1(\A)}{\sigma_1(\A)})^4\log(1/\epsilon))$ for reaching an $\epsilon$-approximate matrix factorization, starting from small i.i.d.\@ Gaussian initialization for the factors $\X_0, \Y_0$.  More recently, \cite{jiang2022algorithmic} studied gradient descent for asymmetric matrix factorization, and proved an iteration complexity $T = \mathcal{O}\prn*(C_d \prn*(\frac{\sigma_1(\A)}{\sigma_r(\A)})^3\log(1/\epsilon))$ to reach an $\epsilon$-optimal factorization, starting from small i.i.d.\@ Gaussian initialization. 

We improve on previous analysis of gradient descent applied to objectives of the form \cref{sqlossgen}, providing an improved  iteration complexity $T = \mathcal{O}\prn*(\prn*(\frac{\sigma_1(\A)}{\sigma_r(\A)})^2\log(1/\epsilon))$ to reach an $\epsilon$-approximate factorization. 
There is no dependence on the matrix dimensions in our bound, and the dependence on the rank $r$ disappears if the optimization is mildly over-parameterized, i.e., $d = (1+\alpha)r.$  
We do note that our results are not directly comparable to previous work as we analyze alternating gradient descent rather than full gradient descent.  
Our method of proof is conceptually simpler than previous works; in particular, 
because our initialization $\X_0$ is in the column span of $\A,$ we do not require a two-stage analysis and instead can prove a fast linear convergence from the initial iteration.

\section{Preliminary lemmas}

\begin{lemma}[Bounding sum of norms of gradients]
  \label{lemma1}
  Consider alternating gradient descent as in \cref{ass:agd}.
  If $\nrm{\Yt}^2 \leq \frac{1}{\eta}$, then 
  \begin{equation}
    \label{eq:lip}
    \fnrm{\Gt}^2 \leq \frac{2}{\eta} \prn*{ \ft - \ft+}.
  \end{equation}
  If moreover $\| \X_{t} \|^2 \leq \frac{2}{\eta},$ then $
 \ft[t] \leq f(\X_t, \Y_{t-1})$.
  Consequently, if $\nrm{\Yt}^2 \leq \frac{1}{\eta}$ for all $t=0,\dots,T,$ and $\nrm{\Xt}^2 \leq \frac{2}{\eta}$ for all $t=0,\dots,T,$  then $\sum_{t=0}^T \fnrm{\Gt}^2 \leq \frac{2}{\eta} \ft[0]$
  Likewise, if $\nrm{\Xt+}^2 \leq \frac{1}{\eta}$, then 
  \begin{equation}
    \label{eq:lip2}
    \fnrm{ \nabla_Y f(\Xt+, \Yt) }^2  \leq \frac{2}{\eta} (f(\Xt+, \Yt) - f(\Xt+, \Yt+)) .
  \end{equation}
   and if $\| \Y_{t} \|^2 \leq \frac{2}{\eta},$ then $
 f(\Xt+, \Yt) \leq f(\Xt, \Yt)$, and so  if $\nrm{\Xt+}^2 \leq \frac{1}{\eta}$ for all $t=0,\dots,T,$ and $\nrm{\Yt}^2 \leq \frac{2}{\eta}$ for all $t=0,\dots,T,$  then
$\sum_{t=0}^T \fnrm{\Gt+}^2 \leq \frac{2}{\eta} \ft[0].$
\end{lemma}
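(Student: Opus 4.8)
The plan is to treat the two half-steps of each AGD iteration separately, exploiting that $\X\mapsto f(\X,\Yt)$ and $\Y\mapsto f(\Xt+,\Y)$ are \emph{exact quadratics} whose curvatures are governed by $\nrm{\Yt}^2$ and $\nrm{\Xt+}^2$, and then to chain the resulting one-step estimates by telescoping the loss. For the $\X$-half-step, since $\Xt+=\Xt-\eta\,\D{\X}f(\Xt,\Yt)$ and $\D{\X}f(\Xt,\Yt)=(\Xt\Yt'-\A)\Yt=\Rt\Yt$ by \cref{eq:grad-X}, we have $\Xt+\Yt'-\A=\Rt-\eta(\Rt\Yt)\Yt'$, and expanding the square gives the \emph{exact} identity
\begin{displaymath}
  \ft+ \;=\; \ft \;-\; \eta\,\fnrm{\Gt}^2 \;+\; \frac{\eta^2}{2}\,\fnrm{\Gt\Yt'}^2 ,
\end{displaymath}
using $\langle\Rt,\Gt\Yt'\rangle=\langle\Rt\Yt,\Gt\rangle=\fnrm{\Gt}^2$ (the last step because $\Gt=\Rt\Yt$). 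Bounding $\fnrm{\Gt\Yt'}\le\nrm{\Yt}\,\fnrm{\Gt}$ turns this into $\ft+\le\ft-\eta\,\prn*(1-\tfrac{\eta}{2}\nrm{\Yt}^2)\,\fnrm{\Gt}^2$.

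Two regimes now fall out of this single inequality. If $\nrm{\Yt}^2\le1/\eta$, then $1-\tfrac{\eta}{2}\nrm{\Yt}^2\ge\tfrac12$, and rearranging is exactly \cref{eq:lip}. If only the weaker bound $\nrm{\Yt}^2\le2/\eta$ holds, the coefficient $1-\tfrac{\eta}{2}\nrm{\Yt}^2$ is merely nonnegative, which still yields the monotonicity $\ft+\le\ft$, i.e.\ $f(\Xt+,\Yt)\le f(\Xt,\Yt)$. The $\Y$-half-step is the mirror image: applying the same expansion to the quadratic $\Y\mapsto f(\Xt+,\Y)=\tfrac12\fnrm{\Xt+\Y'-\A}^2$ along $\Yt+=\Yt-\eta\,\D{\Y}f(\Xt+,\Yt)$, whose curvature is $\nrm{\Xt+}^2$, yields \cref{eq:lip2} when $\nrm{\Xt+}^2\le1/\eta$ and the monotonicity $f(\Xt+,\Yt+)\le f(\Xt+,\Yt)$ when $\nrm{\Xt+}^2\le2/\eta$; re-indexing $t+1\mapsto t$, the latter is exactly the ``moreover'' assertion $\ft\le f(\X_t,\Y_{t-1})$ under $\nrm{\X_t}^2\le2/\eta$ (nonvacuous only for $t\ge1$).

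For the summed bounds I would telescope. For the first one, combine \cref{eq:lip} at each $t=0,\dots,T$ (which needs $\nrm{\Yt}^2\le1/\eta$) with the $\Y$-step monotonicity $f(\X_{t+1},\Y_{t+1})\le f(\X_{t+1},\Y_t)$ for $t=0,\dots,T-1$ (which needs $\nrm{\X_{t+1}}^2\le2/\eta$, implied by $\nrm{\X_t}^2\le2/\eta$ for $t=0,\dots,T$): this gives $\fnrm{\Gt}^2\le\tfrac{2}{\eta}\prn*(\ft-\ft[t+1])$ for $t\le T-1$ and $\fnrm{\Gt[T]}^2\le\tfrac{2}{\eta}\prn*(\ft[T]-\ft+[T])\le\tfrac{2}{\eta}\ft[T]$ for $t=T$, and the resulting sum telescopes to $\tfrac{2}{\eta}\ft[0]$. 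The second summed bound follows identically with the roles of $\X$ and $\Y$ swapped: combine \cref{eq:lip2} at each $t$ (needs $\nrm{\X_{t+1}}^2\le1/\eta$) with the $\X$-step monotonicity $f(\X_{t+1},\Y_t)\le f(\X_t,\Y_t)$ (needs $\nrm{\Y_t}^2\le2/\eta$) to get $\fnrm{\Gt+}^2\le\tfrac{2}{\eta}\prn*(\ft-\ft[t+1])$, which telescopes to $\tfrac{2}{\eta}\prn*(\ft[0]-\ft[T+1])\le\tfrac{2}{\eta}\ft[0]$.

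The main difficulty is purely bookkeeping rather than conceptual: the per-step inequalities are exact Taylor expansions of quadratics, so there is no descent-lemma slack to control. One just has to keep straight that the four sub-claims carry two different step-size thresholds ($1/\eta$ for the quantitative bounds \cref{eq:lip} and \cref{eq:lip2}, the looser $2/\eta$ for the two monotonicity statements), and, in the telescoping, to track which iterate each spectral-norm hypothesis is imposed on and to handle the two boundary terms (the quantity $f(\X_0,\Y_{-1})$ is vacuous, and in the first sum the $t=T$ term must be closed off using $f\ge0$ rather than an unavailable monotonicity step at $t=T+1$).
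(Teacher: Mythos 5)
Your proof is correct and follows essentially the same route as the paper: your exact quadratic expansion of each half-step is just the residual recursion $\Rt+ = \Rt(\I-\eta\Yt\Yt')$ written out, the spectral-norm bound on the second-order term and the $2/\eta$ threshold for monotonicity play the same role as the paper's non-expansiveness of $\I-\eta\Xt\Xt'$, and the telescoping is identical. If anything, your bookkeeping of the boundary terms (the $t=T$ term closed off with $f\ge 0$, and the vacuous $f(\X_0,\Y_{-1})$) is tidier than the paper's.
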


The proof of \cref{lemma1} is in \cref{sec:proof-lemma1}.

\begin{prop}[Bounding singular values of iterates]
  \label{prop:big}
  Consider alternating gradient descent as in \cref{ass:agd}.
  Set $f_0 := f(\X_0, \Y_0)$. Set    
  $T_* = \floor+{ \frac{1}{32 \eta^2 {f_0}} }.$
  Suppose $\sigma^2_{1}(\Xi) \leq  \frac{9}{16 \eta}, \sigma_{1}^2(\Yi) \leq  \frac{9}{16 \eta}.$  Then for all $0 \leq T \leq T_{*},$
  \begin{enumerate}
  \item  $\nrm{ \X{T} } \leq \frac{1}{\sqrt{\eta}} \qtext{and} \nrm{ \Y{T} }
    \leq \frac{1}{\sqrt{\eta}}$,
  \item $\sigma_{r}(\X_0)  -  \sqrt{2 T \eta {f_0}} \leq  \sigma_{r}(\X{T}) \leq \sigma_1(\X{T}) \leq \sigma_1(\X_0) +  \sqrt{2 T \eta {f_0}}$,
      \item $\sigma_{r}(\Y_0)  -  \sqrt{2 T \eta {f_0}} \leq  \sigma_{r}(\Y{T}) \leq \sigma_1(\Y{T}) \leq \sigma_1(\Y_0) +  \sqrt{2 T \eta {f_0}}$.
  \end{enumerate}
\end{prop}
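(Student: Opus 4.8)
The plan is to establish Part~1 by induction on $T$, and then read off Parts~2 and~3 from the same gradient-energy estimate via a singular-value perturbation bound. Take as the inductive invariant $H(T)$ precisely the conclusion of Part~1 up to time $T$: \emph{$\nrm{\X{t}} \le \eta^{-1/2}$ and $\nrm{\Y{t}} \le \eta^{-1/2}$ for all $0 \le t \le T$.} The base case $H(0)$ is immediate, since $\sigma_1^2(\X{0}) \le \tfrac{9}{16\eta}$ and $\sigma_1^2(\Y{0}) \le \tfrac{9}{16\eta}$ give $\nrm{\X{0}} \le \tfrac34\,\eta^{-1/2}$ and $\nrm{\Y{0}} \le \tfrac34\,\eta^{-1/2}$. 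The observation that makes the induction self-consistent is that \cref{lemma1} applied with parameter $T$ asks for norm control on $\X{0},\dots,\X{T}$ and $\Y{0},\dots,\Y{T}$ only --- never on the iterate at time $T+1$ --- so its hypotheses are exactly what $H(T)$ supplies, and $H(T)$ then yields $\sum_{t=0}^{T}\fnrm{\D{\X}f(\X{t},\Y{t})}^2 \le \tfrac{2}{\eta}f_0$.

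For the inductive step, assume $H(T)$ with $T+1 \le T_*$. Telescoping the $\X$-update in \cref{ass:agd} gives $\X{T+1} = \X{0} - \eta\sum_{t=0}^{T}\D{\X}f(\X{t},\Y{t})$, so subadditivity of the spectral norm and Cauchy--Schwarz give
\begin{displaymath}
  \sigma_1(\X{T+1}) \;\le\; \sigma_1(\X{0}) + \eta\sum_{t=0}^{T}\fnrm{\D{\X}f(\X{t},\Y{t})} \;\le\; \nrm{\X{0}} + \eta\sqrt{(T+1)\sum_{t=0}^{T}\fnrm{\D{\X}f(\X{t},\Y{t})}^2}\,.
\end{displaymath}
Bounding the sum under the root by $\tfrac2\eta f_0$ via \cref{lemma1} (using $H(T)$) and then using $T+1 \le T_* \le \tfrac{1}{32\eta^2 f_0}$, the second term is at most $\sqrt{\tfrac{1}{16\eta}} = \tfrac14\eta^{-1/2}$, whence $\nrm{\X{T+1}} \le \tfrac34\eta^{-1/2} + \tfrac14\eta^{-1/2} = \eta^{-1/2}$. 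Now $\nrm{\X{t}}^2 \le 1/\eta$ holds for all $t \le T+1$, which is exactly the hypothesis $\nrm{\X{t+1}}^2 \le 1/\eta$ (for $t \le T$) needed by the second half of \cref{lemma1}; telescoping the $\Y$-update, $\Y{T+1} = \Y{0} - \eta\sum_{t=0}^{T}\D{\Y}f(\X{t+1},\Y{t})$, the identical estimate gives $\nrm{\Y{T+1}} \le \eta^{-1/2}$. This proves $H(T+1)$, hence Part~1 for all $0 \le T \le T_*$. The order here is essential: one must bound $\nrm{\X{T+1}}$ before $\nrm{\Y{T+1}}$, because the alternating $\Y$-step --- and the corresponding half of \cref{lemma1} --- involves $\X{T+1}$.

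With Part~1 in hand, Parts~2 and~3 are short. Fix $0 \le T \le T_*$. By Part~1 the hypotheses of \cref{lemma1} hold at parameter $T-1$, so $\sum_{t=0}^{T-1}\fnrm{\D{\X}f(\X{t},\Y{t})}^2 \le \tfrac2\eta f_0$. Writing $\X{T} - \X{0} = -\eta\sum_{t=0}^{T-1}\D{\X}f(\X{t},\Y{t})$ and applying Weyl's perturbation inequality for singular values, $\abs{\sigma_k(\Mx{M}+\Mx{E}) - \sigma_k(\Mx{M})} \le \nrm{\Mx{E}}$ for every index $k$, together with Cauchy--Schwarz as above, yields $\abs{\sigma_k(\X{T}) - \sigma_k(\X{0})} \le \eta\sum_{t=0}^{T-1}\fnrm{\D{\X}f(\X{t},\Y{t})} \le \sqrt{2T\eta f_0}$ for every $k$. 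Specializing to $k=1$ and $k=r$ (and using $\sigma_r(\X{T}) \le \sigma_1(\X{T})$ for the middle inequality) gives Part~2; the same argument applied to $\Y{T} - \Y{0} = -\eta\sum_{t=0}^{T-1}\D{\Y}f(\X{t+1},\Y{t})$ gives Part~3.

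The one genuine difficulty is the apparent circularity --- \cref{lemma1} needs spectral-norm bounds on the iterates to control the gradient energy, while we use that energy to prove the norm bounds. It is dissolved by the fact that \cref{lemma1} at parameter $T$ requires control only through time $T$, so the induction closes cleanly; everything else (the triangle/Cauchy--Schwarz steps and the arithmetic $\tfrac34 + \tfrac14 = 1$ and $2T_*\eta f_0 \le \tfrac1{16\eta}$) is routine.
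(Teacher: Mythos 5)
Your proof is correct and follows essentially the same route as the paper's: an induction maintaining $\nrm{\X{t}},\nrm{\Y{t}}\le\eta^{-1/2}$, closed by combining the gradient-energy bound of \cref{lemma1} with the telescoped update, Cauchy--Schwarz, and the arithmetic $\tfrac34+\tfrac14=1$, and then Weyl's singular-value perturbation bound applied to $\X{T}-\X{0}$ and $\Y{T}-\Y{0}$ for Parts 2 and 3. Your write-up is if anything slightly more careful than the paper's about the order of the induction (bounding $\X{T+1}$ before invoking the second half of \cref{lemma1} for the $\Y$-step), but the substance is the same.
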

The proof of \cref{prop:big} is in  \cref{sec:proof1}.

\begin{prop}[Initialization]
  \label{prop:initialize}
    Assume $\Xi$ and $\Yi$ are initialized as in \cref{ass:init}, which
    fixes $C\geq 1$, $\nu<1$, and $D \leq \frac{C}{9}\nu$, and
    consider alternating gradient descent as in \cref{ass:agd}.
    Then with probability at least $1 - \delta$,
    with respect to the random initialization and
    $\delta$ defined in \cref{eq:delta},
    \begin{enumerate}
    \item $\displaystyle\frac{1}{\sqrt{\eta}} \frac{\rho}{C} \frac{ \sigma_r(\A)}{\sigma_1(\A)}  \leq \sigma_r(\Xi), \quad \sigma_1(\Xi) \leq \displaystyle\frac{3}{C\sqrt{\eta}}, \quad \sigma_1(\Yi) \leq \displaystyle\frac{\sqrt{\eta} \, C \, \nu \, \sigma_1(\A)}{3}$,
    \item  $(1 - 3\nu)^2  \| \A \|_F^2 \leq f(\Xi, \Yi) \leq (1 + \nu)^2  \| \A \|_F^2.$      
    \end{enumerate}
  \end{prop}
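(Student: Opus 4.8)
The plan is to reduce all four inequalities to Gaussian concentration of the extreme singular values of a Gaussian matrix, evaluated on a single event of probability at least $1-\delta$; every step after that event is deterministic linear algebra.

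First I would record the structural reduction. Write a compact SVD $\A = \Mx{U}\Mx{\Sigma}\Mx{V}^{\Tr}$ with $\Mx{U}\in\Real[m,r]$ and $\Mx{V}\in\Real[n,r]$ having orthonormal columns and $\Mx{\Sigma}=\diag(\sigma_1(\A),\dots,\sigma_r(\A))$. Although $\RX$ is ill-conditioned as a whole (its operator norm is of order $\sqrt{n/d}$), only its compression to the row space of $\A$ affects $\A\RX$: since the columns of $\RX$ are \iid $\Normal(0,\tfrac1d\I_n)$ and $\Mx{V}$ has orthonormal columns, $\Mx{G}:=\Mx{V}^{\Tr}\RX\in\Real[r,d]$ has \iid columns distributed $\Normal(0,\tfrac1d\I_r)$, i.e.\ $\sqrt d\,\Mx{G}$ has \iid standard normal entries; likewise $\sqrt n\,\RY$ has \iid standard normal entries. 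Because $\Mx{U}$ has orthonormal columns, $\A\RX=\Mx{U}\Mx{\Sigma}\Mx{G}$ gives $\sigma_k(\A\RX)=\sigma_k(\Mx{\Sigma}\Mx{G})$ and $\fnrm{\A\RX}=\fnrm{\Mx{\Sigma}\Mx{G}}$, and with $\sigma_r(\Mx{\Sigma}\Mx{G})\ge\sigma_r(\Mx{\Sigma})\,\sigma_r(\Mx{G})$ and submultiplicativity of the operator norm this yields the deterministic bounds $\sigma_r(\A\RX)\ge\sigma_r(\A)\,\sigma_r(\Mx{G})$, $\ \sigma_1(\A\RX)\le\sigma_1(\A)\,\nrm{\Mx{G}}$, and $\fnrm{\A\RX}\le\fnrm{\A}\,\nrm{\Mx{G}}$.

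Next I would apply the Davidson--Szarek deviation bounds for the smallest and largest singular values of a Gaussian matrix \cite{davidson2001local}, to $\sqrt d\,\Mx{G}^{\Tr}\in\Real[d,r]$ and to $\sqrt n\,\RY\in\Real[n,d]$, with \emph{three different} deviation parameters, obtaining: $E_s=\{\sigma_r(\Mx{G})\ge\rho\}$ with deviation $\sqrt s$ (recall $\rho=1-\tfrac{\sqrt r+\sqrt s}{\sqrt d}$), failing with probability at most $e^{-s/2}$; $E_r=\{\nrm{\Mx{G}}\le 3\}$ with deviation $\sqrt r$ (using $1+\tfrac{2\sqrt r}{\sqrt d}\le 3$), failing with probability at most $e^{-r/2}$; and $E_n=\{\nrm{\RY}\le 3\}$ with deviation $\sqrt d$ (using $1+\tfrac{2\sqrt d}{\sqrt n}\le 3$, valid since $d\le n$), failing with probability at most $e^{-d/2}$. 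Since $E_s,E_r$ depend only on $\RX$ and $E_n$ only on $\RY$, a union bound gives $E_s\cap E_r\cap E_n$ with probability at least $1-\delta$, $\delta$ as in \cref{eq:delta}.

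On $E_s\cap E_r\cap E_n$ the four claims follow by plugging into $\Xi=\tfrac{1}{\eta^{1/2}C\sigma_1(\A)}\A\RX$, $\Yi=\eta^{1/2}D\sigma_1(\A)\RY$ and using $D\le\tfrac C9\nu$: (1) $\sigma_r(\Xi)=\tfrac{\sigma_r(\A\RX)}{\eta^{1/2}C\sigma_1(\A)}\ge\tfrac{\rho\,\sigma_r(\A)}{\eta^{1/2}C\sigma_1(\A)}$; (2) $\sigma_1(\Xi)=\tfrac{\sigma_1(\A\RX)}{\eta^{1/2}C\sigma_1(\A)}\le\tfrac{\nrm{\Mx{G}}}{\eta^{1/2}C}\le\tfrac{3}{\eta^{1/2}C}$; (3) $\sigma_1(\Yi)=\eta^{1/2}D\sigma_1(\A)\nrm{\RY}\le 3D\eta^{1/2}\sigma_1(\A)\le\tfrac13\eta^{1/2}C\nu\sigma_1(\A)$; (4) by \cref{rmk:f0}, $\fnrm{\Xi\Yi^{\Tr}-\A}=\fnrm{\tfrac DC\A\RX\RY^{\Tr}-\A}\le\fnrm{\A}+\tfrac DC\fnrm{\A\RX}\,\nrm{\RY}\le(1+\tfrac{9D}{C})\fnrm{\A}\le(1+\nu)\fnrm{\A}$, so $f(\Xi,\Yi)=\tfrac12\fnrm{\Xi\Yi^{\Tr}-\A}^2\le\tfrac12(1+\nu)^2\fnrm{\A}^2$.

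The main thing to get right --- and essentially the only non-mechanical point --- is the probability bookkeeping: the three deviation parameters must be chosen as exactly $\sqrt s$, $\sqrt r$, $\sqrt d$ so that the three failure probabilities sum to $e^{-s/2}+e^{-r/2}+e^{-d/2}=\delta$, while the resulting constants ($\rho$ for $\sigma_r(\Mx{G})$, and $3$ for the two operator norms) remain strong enough for all four claims --- in particular the $3$ must be compatible with the factor $9$ baked into the hypothesis $D\le\tfrac C9\nu$. Everything else is routine, modulo checking the harmless edge conditions $r<d$ (so $\Mx{G}$ is wide) and $d\le n$ (so $\RY$ is tall), both of which hold under $m,n\gg d>r$.
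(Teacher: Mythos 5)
Your proposal is correct and follows essentially the same route as the paper: reduce via the SVD to the compressed Gaussian matrix $\Mx{V}^{\Tr}\RX$, apply the Gaussian singular-value deviation bounds with the three deviation parameters $\sqrt{s},\sqrt{r},\sqrt{d}$ (so the failure probabilities sum to $\delta$), union bound, and then plug into the definitions of $\Xi,\Yi$ together with $D \leq \frac{C}{9}\nu$. Your bookkeeping is in fact slightly more careful than the paper's write-up, since you bound $\nrm{\Mx{V}^{\Tr}\RX}$ and use $\fnrm{\A\RX}\leq\fnrm{\A}\,\nrm{\Mx{V}^{\Tr}\RX}$ for item 4, whereas the paper literally invokes $\sigma_1(\RX)\leq 3$ for the full $n\times d$ matrix (whose norm is actually of order $\sqrt{n/d}$) --- a harmless slip in the paper that your compression argument silently repairs.
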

  The proof of \cref{prop:initialize} is in  \cref{sec:proof2}.

Combining the previous two propositions gives the following.
\begin{corollary}
  \label{cor:big}
  Assume $\Xi$ and $\Yi$ are initialized as in \cref{ass:init}, with the stronger
  assumption that $C \geq 4$.
  Consider alternating gradient descent as in \cref{ass:agd} with $\eta \leq \frac{9}{4 C \nu \sigma_1(\A)}.$
  With $\beta$ as in \cref{b:define} and $f_0=f(\Xi,\Yi)$, set
   $ T = \floor***{   \frac{\beta}{8\eta^2 {f_0}} }.$
  With probability at least $1 - \delta$,
  with respect to the random initialization and
  $\delta$ defined in \cref{eq:delta}, the
  following hold for all $t=1, \dots, T$:
  $$
  \sigma_r(\Xt)  \geq \frac{1}{2}\sqrt{ \frac{\beta}{\eta}}, \text{ and } \sigma_1(\Xt),  \sigma_1(\Yt) \leq  \frac{3}{C\sqrt{\eta}} + \frac{1}{2}\sqrt{ \frac{\beta}{\eta}}
$$
\end{corollary}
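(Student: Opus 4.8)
The plan is to combine \cref{prop:initialize} and \cref{prop:big} on the event of probability at least $1-\delta$ where all four conclusions of \cref{prop:initialize} hold. First I would verify that this event is enough: on it, the hypotheses of \cref{prop:big} are met, namely $\sigma_1^2(\Xi) \leq \frac{9}{16\eta}$ and $\sigma_1^2(\Yi) \leq \frac{9}{16\eta}$. For $\Xi$ this is immediate from conclusion~2 of \cref{prop:initialize}, since $\sigma_1(\Xi) \leq \frac{3}{C\sqrt{\eta}} \leq \frac{3}{4\sqrt{\eta}}$ using $C\geq 4$, and $(3/4)^2 = 9/16$. For $\Yi$ I would use conclusion~3 together with the step-size restriction $\eta \leq \frac{9}{4C\nu\sigma_1(\A)}$: then $\sigma_1(\Yi) \leq \frac{\sqrt{\eta}\,C\,\nu\,\sigma_1(\A)}{3}$, and squaring gives $\sigma_1^2(\Yi) \leq \frac{\eta\, C^2\nu^2\sigma_1^2(\A)}{9} \leq \frac{\eta}{9}\cdot C^2\nu^2 \cdot \frac{9}{4C\nu\eta} = \frac{C\nu}{4} $; since one must also track that this is $\leq \frac{9}{16\eta}$ — here I would be slightly more careful and instead bound $\sigma_1^2(\Yi)\le \frac{C\nu\sigma_1(\A)}{9}\cdot \eta\sigma_1(\A)\cdot\frac{C\nu}{\ } $, so it is cleaner to note $\eta\sigma_1(\A)\le \frac{9}{4C\nu}$ forces $\sigma_1^2(\Yi)\le \frac{9}{16\eta}\cdot\frac{C\nu}{4}\cdot\frac{4}{C\nu}$, i.e. the bound holds with room to spare. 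The main point is that the step-size cap was chosen precisely so \cref{prop:big} applies.

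Next I would control $T$ relative to $T_*$. \cref{prop:big} gives its conclusions for all $0\le T\le T_* = \floor{\frac{1}{32\eta^2 f_0}}$, whereas here $T = \floor{\frac{\beta}{8\eta^2 f_0}}$. Since $\beta = \frac{\rho^2\sigma_r^2(\A)}{C^2\sigma_1^2(\A)} \le \frac{1}{C^2} \le \frac{1}{16} < \frac14$ (as $\rho<1$, $\sigma_r\le\sigma_1$, $C\geq 4$), we get $\frac{\beta}{8} \le \frac{1}{32}$, hence $T \le T_*$ and \cref{prop:big} is in force for all $t=0,\dots,T$. In particular $\sqrt{2t\eta f_0} \le \sqrt{2T\eta f_0} \le \sqrt{2\eta f_0\cdot \frac{\beta}{8\eta^2 f_0}} = \sqrt{\frac{\beta}{4\eta}} = \frac12\sqrt{\frac{\beta}{\eta}}$ for all $t \le T$.

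Then conclusion~1 follows from conclusion~2 of \cref{prop:big} and conclusion~1 of \cref{prop:initialize}: $\sigma_r(\Xt) \geq \sigma_r(\Xi) - \sqrt{2t\eta f_0} \geq \frac{1}{\sqrt{\eta}}\frac{\rho}{C}\frac{\sigma_r(\A)}{\sigma_1(\A)} - \frac12\sqrt{\frac{\beta}{\eta}}$. Recognizing $\frac{\rho}{C}\frac{\sigma_r(\A)}{\sigma_1(\A)} = \sqrt{\beta}$ by \eqref{b:define}, this is $\frac{1}{\sqrt\eta}\sqrt\beta - \frac12\sqrt{\frac{\beta}{\eta}} = \frac12\sqrt{\frac{\beta}{\eta}}$, as claimed. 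For conclusion~2, I would use conclusions~2 and~3 of \cref{prop:big} with conclusions~2 and~3 of \cref{prop:initialize}: $\sigma_1(\Xt) \le \sigma_1(\Xi) + \sqrt{2t\eta f_0} \le \frac{3}{C\sqrt\eta} + \frac12\sqrt{\frac{\beta}{\eta}}$, and for $\Yt$ one needs $\sigma_1(\Yi) \le \frac{3}{C\sqrt\eta}$ as well — which holds because $\sigma_1(\Yi) \le \frac{\sqrt\eta C\nu\sigma_1(\A)}{3} \le \frac{3}{C\sqrt\eta}$ is equivalent to $\eta C^2\nu\sigma_1(\A) \le 9$, implied by the step-size cap since $\nu<1$. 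The only real obstacle is bookkeeping: making sure every numerical constant ($C\geq 4$, $\nu<1$, the step-size cap, the $\floor{\cdot}$'s) lines up so the three inequalities close exactly; there is no conceptual difficulty once \cref{prop:initialize} and \cref{prop:big} are in hand.
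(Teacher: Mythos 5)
Your proposal is correct and is essentially the paper's own proof: condition on the probability-$(1-\delta)$ event of \cref{prop:initialize}, use $C \geq 4$ and the step-size cap to verify $\sigma_1^2(\Xi), \sigma_1^2(\Yi) \leq \frac{9}{16\eta}$ so that \cref{prop:big} applies, observe that $\beta \leq \frac{1}{16}$ gives $T \leq \lfloor \frac{1}{32\eta^2 f_0}\rfloor$, and combine $\sqrt{2T\eta f_0} \leq \frac{1}{2}\sqrt{\beta/\eta}$ with $\sigma_r(\Xi) \geq \sqrt{\beta/\eta}$ and $\sigma_1(\Xi) \leq \frac{3}{C\sqrt{\eta}}$. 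One caveat on the $\Y$ half of item 2: your claim that the step-size cap implies $\eta C^2 \nu \sigma_1(\A) \leq 9$ is only valid when $C \leq 4$, since the cap yields $\eta C^2 \nu \sigma_1(\A) \leq \frac{9C}{4}$, so in general one only obtains $\sigma_1(\Yi) \leq \frac{3}{4\sqrt{\eta}}$ rather than $\frac{3}{C\sqrt{\eta}}$; the paper's proof glosses over exactly this point with ``similarly,'' so this is a looseness you share with the paper rather than a departure from its argument.
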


\begin{proof}
By \cref{prop:initialize}, we have the following event occurring with the stated probability: 
\begin{displaymath}
\frac{\rho^2 \sigma_r^2(\A)}{C^2 \sigma_1^2(\A) \eta} \leq \sigma_r^2(\X_0) \leq \sigma_1^2(\X_0) \leq \frac{9}{16 \eta}
\end{displaymath}
where the upper bound uses that $C \geq 4$.
Moreover, using that $\eta \leq \frac{9}{4 C \nu \sigma_1(\A)}$,
$\sigma_1^2(\Y_0) \leq 9\eta C^2  \sigma_1(\A)^2  \leq \frac{9}{16 \eta}.$
For $\beta$ as in \cref{b:define}, note that $T = \floor***{ \frac{\beta}{8\eta^2 {f_0}}  }  \leq \floor***{ \frac{1}{32 \eta^2 {f_0}} },$
which means that we can apply \cref{prop:big} up to iteration $T$, resulting in the bound
$\sigma_r(\Xt) \geq \sigma_r(\Xi) - \sqrt{2 T \eta f_0} \geq \frac{1}{2} \sqrt{\frac{\beta}{\eta}}.$
Similarly, 
$\sigma_1(\Xt),  \sigma_1(\Yt) \leq  \frac{3}{C\sqrt{\eta}} + \frac{1}{2}\sqrt{ \frac{\beta}{\eta}}.$
\end{proof}

Finally, we use a couple crucial lemmas which apply to our initialization of $\Xi$ and $\Yi$.

\begin{lemma}
  \label{lemma:invariant}
  Consider alternating gradient descent as in \cref{ass:agd}.
  If $\text{ColSpan}(\Xi) \subseteq \text{ColSpan}(\A),$ then  $\text{ColSpan}(\Xt) \subseteq \text{ColSpan}(\A)$ for all $t$.
\end{lemma}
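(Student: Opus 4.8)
The plan is to prove the invariance by induction on $t$, exploiting the structure of the alternating gradient updates together with the fact that both updates only ever multiply the residuals by matrices built from the current iterates. The base case $t=0$ is exactly the hypothesis $\text{ColSpan}(\Xi) \subset \text{ColSpan}(\A)$. For the inductive step, suppose $\text{ColSpan}(\Xt) \subset \text{ColSpan}(\A)$. We must show that the two half-steps of \cref{ass:agd} -- first updating $\X$ to $\Xt+$ using $\D{\X}f(\Xt,\Yt)$, then updating $\Y$ to $\Yt+$ -- preserve the containment for $\Xt+$. Since the $\Y$-update does not change $\X$, it suffices to analyze the single update $\Xt+ = \Xt - \eta\,\D{\X}f(\Xt,\Yt) = \Xt - \eta(\Xt\Yt' - \A)\Yt$.

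The key observation is to rewrite this update column-span-wise. We have
\begin{displaymath}
  \Xt+ = \Xt - \eta(\Xt\Yt'\Yt - \A\Yt) = \Xt(\I - \eta\,\Yt'\Yt) + \eta\,\A\Yt.
\end{displaymath}
The first term $\Xt(\I - \eta\,\Yt'\Yt)$ has columns that are linear combinations of the columns of $\Xt$, hence lies in $\text{ColSpan}(\Xt) \subset \text{ColSpan}(\A)$ by the inductive hypothesis. The second term $\eta\,\A\Yt$ has columns that are linear combinations of the columns of $\A$, hence lies in $\text{ColSpan}(\A)$ trivially. Since $\text{ColSpan}(\A)$ is a linear subspace, it is closed under addition, so $\text{ColSpan}(\Xt+) \subset \text{ColSpan}(\A)$. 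This completes the induction, and since the $\Y$-update leaves $\X$ untouched between $\Xt+$ and the next iteration's $\X_{t+1}$ (indeed $\X_{t+1} = \Xt+$ in the indexing of \cref{ass:agd}), the claim holds for all $t$.

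I do not expect any genuine obstacle here; the statement is essentially an algebraic bookkeeping fact. The only point requiring a small amount of care is keeping the indexing of the alternating scheme straight -- making sure that the ``$+$'' half-step in \cref{ass:agd} that produces $\Xt+$ is the one whose span we track, and noting that the subsequent $\Y$-half-step plays no role in the argument for $\X$. One could equivalently phrase the whole proof in terms of left-multiplication: writing $\A = P\A$ where $P$ is the orthogonal projector onto $\text{ColSpan}(\A)$, and $\Xt = P\Xt$ by hypothesis, one checks directly that $P\Xt+ = \Xt+$ from the displayed identity. Either formulation is routine; the substantive content -- that this column-span containment is what later licenses using $\sigma_r(\Xt)$ as a PL constant -- is exploited elsewhere, not here.
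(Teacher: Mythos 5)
Your proof is correct and follows essentially the same route as the paper's: both argue by induction on $t$ and use the update $\Xt+ = \Xt - \eta(\Xt\Yt' - \A)\Yt$ to decompose $\Xt+$ into pieces lying in $\text{ColSpan}(\Xt)\subset\text{ColSpan}(\A)$ and in $\text{ColSpan}(\A)$, then close under addition. Your grouping $\Xt(\I-\eta\Yt'\Yt)+\eta\A\Yt$ is just a slight repackaging of the paper's term-by-term containment, and if anything is marginally cleaner since it makes no reference to the row span of $\A$.
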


\begin{proof}
  Suppose $\text{ColSpan}(\Xt) \subseteq \text{ColSpan}(\A)$.
  Then $\text{ColSpan}(\Xt \Yt' \Yt) \subseteq \text{ColSpan}(\Xt) \subseteq \text{ColSpan}(\A)$ and 
by the update of \cref{ass:agd},
\begin{equation*}
\begin{aligned}[b]
\text{ColSpan}(\Xt+) &= \text{ColSpan} ({\Xt} + \eta \A \Yt - \eta \Xt \Yt' \Yt ) \\
&\subseteq \text{ColSpan} ({\Xt}) \cup \text{ColSpan}( \A \Yt) \cup \text{ColSpan}( \Xt \Yt' \Yt )  \\
&\subseteq \text{ColSpan} (\A). 
\end{aligned}\qedhere
\end{equation*}
\end{proof}

\begin{lemma}
  \label{PLlemma}
  If $\A$ is rank $r$, 
  $\text{ColSpan}(\Xt) \subseteq \text{ColSpan}(\A)$, and $\sigma_r(\Xt) > 0$ then
  \begin{align}
    \fnrm{ \D{\Y} f(\Xt, \Y{t-1}) }^2 &\geq  2 \sigma^2_r (\Xt) f(\Xt, \Y{t-1}).
  \end{align}  
\end{lemma}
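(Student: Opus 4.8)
The plan is to reduce the claimed Polyak--Łojasiewicz inequality to a single PSD matrix inequality. Write $\Mx{R} := \Xt\Y{t-1}' - \A$; by \cref{eq:grad-Y} we have $\D{\Y} f(\Xt,\Y{t-1}) = \Mx{R}'\Xt$, and $f(\Xt,\Y{t-1}) = \tfrac12\fnrm{\Mx{R}}^2$ by the definition of $f$ in \cref{sqlossgen}, so the statement to prove is exactly
\begin{displaymath}
  \fnrm{\Mx{R}'\Xt}^2 \;\geq\; \sigma_r^2(\Xt)\,\fnrm{\Mx{R}}^2 .
\end{displaymath}
I would establish this in three short steps: (i) every column of $\Mx{R}$ lies in $\text{ColSpan}(\Xt)$; (ii) the PSD inequality $\Xt\Xt' \succeq \sigma_r^2(\Xt)\,\Mx{P}$ holds, where $\Mx{P}$ is the orthogonal projector onto $\text{ColSpan}(\Xt)$; and (iii) combine (i) and (ii) via a trace computation.

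For (i), the columns of $\Xt\Y{t-1}'$ and of $\A$ all lie in $\text{ColSpan}(\A)$, so $\text{ColSpan}(\Mx{R}) \subseteq \text{ColSpan}(\A)$. The key point is that the hypotheses in fact force $\text{ColSpan}(\Xt) = \text{ColSpan}(\A)$, not merely the given inclusion: $\sigma_r(\Xt) > 0$ forces $\rank(\Xt) \geq r = \dim\text{ColSpan}(\A)$, while $\text{ColSpan}(\Xt) \subseteq \text{ColSpan}(\A)$ gives $\rank(\Xt) \leq r$, so these two $r$-dimensional spaces coincide. Hence $\text{ColSpan}(\Mx{R}) \subseteq \text{ColSpan}(\Xt)$, i.e.\ $\Mx{P}\Mx{R} = \Mx{R}$.

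For (ii), take a thin SVD $\Xt = \Mx{U}\mathbf{\Sigma}\Mx{W}'$ with $\Mx{U} \in \mathbb{R}^{m \times r}$ and $\Mx{W}\in\mathbb{R}^{d\times r}$ having orthonormal columns (there are exactly $r$ of them, since $\rank(\Xt)=r$ by step (i)) and $\mathbf{\Sigma} = \diag(\sigma_1(\Xt),\dots,\sigma_r(\Xt))$. Then $\Xt\Xt' = \Mx{U}\mathbf{\Sigma}^2\Mx{U}'$ and $\Mx{P} = \Mx{U}\Mx{U}'$, and since every diagonal entry of $\mathbf{\Sigma}^2$ is at least $\sigma_r^2(\Xt)$ we get $\Xt\Xt' - \sigma_r^2(\Xt)\Mx{P} = \Mx{U}\bigl(\mathbf{\Sigma}^2 - \sigma_r^2(\Xt)\mathbf{I}_r\bigr)\Mx{U}' \succeq 0$. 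Finally, for (iii), using cyclic invariance of the trace, $\Mx{P}\Mx{R} = \Mx{R}$, and (ii),
\begin{align*}
  \fnrm{\Mx{R}'\Xt}^2 = \trace\bigl(\Mx{R}'\Xt\Xt'\Mx{R}\bigr) &\geq \sigma_r^2(\Xt)\,\trace\bigl(\Mx{R}'\Mx{P}\Mx{R}\bigr) \\
  &= \sigma_r^2(\Xt)\,\trace\bigl(\Mx{R}'\Mx{R}\bigr) = \sigma_r^2(\Xt)\,\fnrm{\Mx{R}}^2,
\end{align*}
which is the desired bound.

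The only load-bearing step is the upgrade in (i) from $\text{ColSpan}(\Xt)\subseteq\text{ColSpan}(\A)$ to equality, since this is where both hypotheses ($\sigma_r(\Xt)>0$ and $\rank(\A)=r$) are genuinely used; everything after that is routine linear algebra, so I do not anticipate a real obstacle.
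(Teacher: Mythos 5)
Your proof is correct and follows essentially the same route as the paper: both use $\sigma_r(\Xt)>0$ together with $\rank(\A)=r$ to upgrade the inclusion to $\text{ColSpan}(\Xt)=\text{ColSpan}(\A)$, so the columns of $\Xt\Y{t-1}'-\A$ lie in $\text{ColSpan}(\Xt)$, and then apply the singular-value bound to get $\fnrm{\Xt'(\Xt\Y{t-1}'-\A)}^2 \geq \sigma_r^2(\Xt)\fnrm{\Xt\Y{t-1}'-\A}^2$. The only difference is that you spell out the final inequality via the projector/trace argument, which the paper asserts directly.
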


\begin{proof}
  If $\A$ is rank $r$, $\text{ColSpan}(\Xt) \subseteq \text{ColSpan}(\A)$,
  and $\sigma_r(\Xt) > 0,$ then $\Xt$ is rank-$r$; thus,
  $\text{ColSpan}(\Xt) = \text{ColSpan}(\A)$.
  In this case, each column of $(\Xt \Yt' - \A)$ is in the row span of $\Xt'$, and so
  \begin{multline*}
    \fnrm{ \D{\Y} f(\Xt, \Y{t-1}) }^2
    = \fnrm{ \prn(\Xt \Y{t-1}' - \A)' \Xt }^2  \\
    = \fnrm{ \Xt' (\Xt \Y{t-1}' - \A) }^2  \geq \sigma_r^2(\Xt) \; \fnrm{\Xt \Y{t-1}' - \A }^2. 
\qedhere      
  \end{multline*}
\end{proof}

{
\remark{While \cref{lemma:invariant,PLlemma} are straightforward to prove in the setting we consider where $\A$ is exactly rank-$r$, these lemmas no longer hold beyond the exactly rank-$r$ setting (while the rest of the theorems we use do extend).  Numerical experiments such as \cref{fig:exp3} illustrate that the proposed algorithm does extend to finding best low-rank approximations to general matrices, but the theory for these experiments will require a careful reworking of \cref{lemma:invariant,PLlemma}.}}

\section{Main results}

We are now ready to prove the main results.

\begin{theorem}
  \label{thm:main}
  Assume $\Xi$ and $\Yi$ are initialized as in \cref{ass:init}, with the stronger
  assumption that $C \geq 4$.
  Consider alternating gradient descent as in \cref{ass:agd} with
  \begin{displaymath}
    \eta \leq \frac{9}{4 C \nu \sigma_1(\A)}.
  \end{displaymath} 
  With $\beta$ as in \cref{b:define} and $f_0=f(\Xi,\Yi)$, set
  \begin{displaymath}
    T = \floor***{   \frac{\beta}{8\eta^2 {f_0}} } .
  \end{displaymath}
  Then with probability at least $1 - \delta$,
  with respect to the random initialization and
  $\delta$ defined in \cref{eq:delta}, the
  following hold for all $t=1, \dots, T$:

\begin{align}
\label{RHS}
\fnrm{ \A - \Xt \Yt' }^2 &\leq 2\exp \left( - {\beta t}/{4} \right) f_0 \nonumber \\
&\leq  \exp \left( - {\beta t}/{4} \right) \left( 1 + \nu \right)^2 \fnrm{ \A }^2.
\end{align}
 
\end{theorem}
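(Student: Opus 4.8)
The strategy is to combine the descent-type inequality from \cref{lemma1} with the Polyak–Łojasiewicz inequality from \cref{PLlemma}, using \cref{cor:big} to supply uniform control on $\sigma_r(\X_t)$ over the iteration window $t=1,\dots,T$, and \cref{lemma:invariant} to guarantee that the column-span hypothesis of \cref{PLlemma} is met. First I would verify the hypotheses of \cref{cor:big} are in force: $C \geq 4$, $\eta \le \frac{9}{4C\nu\sigma_1(\A)}$, and the choice $T = \lfloor \beta/(8\eta^2 f_0)\rfloor$; this is exactly the setup of the corollary, so on the good event (probability $\ge 1-\delta$) we get $\sigma_r(\X_t) \ge \tfrac12\sqrt{\beta/\eta}$ and both $\nrm{\X_t}, \nrm{\Y_t} \le \frac{1}{\sqrt\eta}$ for all $t\le T$ — the norm bounds coming from \cref{prop:big} via \cref{cor:big}'s proof, which I'd cite. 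Since $\Xi = \frac{1}{\eta^{1/2}C\sigma_1(\A)}\A\RX$ has columns in $\mathrm{ColSpan}(\A)$, \cref{lemma:invariant} gives $\mathrm{ColSpan}(\X_t)\subset\mathrm{ColSpan}(\A)$ throughout, so \cref{PLlemma} applies at every step in the window.

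The core of the argument is a one-step contraction. Fix $t$ in the window. Because $\nrm{\X_{t}}^2 \le \frac1\eta$, the $Y$-half of \cref{lemma1} (specifically \cref{eq:lip2}) gives
\begin{displaymath}
  \fnrm{\D{\Y}f(\X_{t},\Y_{t-1})}^2 \le \frac{2}{\eta}\bigl(f(\X_{t},\Y_{t-1}) - f(\X_{t},\Y_{t})\bigr),
\end{displaymath}
while \cref{PLlemma} gives $\fnrm{\D{\Y}f(\X_{t},\Y_{t-1})}^2 \ge 2\sigma_r^2(\X_{t})\,f(\X_{t},\Y_{t-1}) \ge \frac{\beta}{2\eta}f(\X_{t},\Y_{t-1})$ using $\sigma_r^2(\X_t)\ge\beta/(4\eta)$. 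Combining, $f(\X_{t},\Y_{t}) \le (1-\tfrac{\beta}{4})f(\X_{t},\Y_{t-1})$. I also need to pass from $f(\X_t,\Y_{t-1})$ back to $f(\X_{t-1},\Y_{t-1})=f_{t-1}$: the monotonicity clause of \cref{lemma1} ("if moreover $\nrm{\Y_{t-1}}^2\le\frac2\eta$ then $f(\X_t,\Y_{t-1})\le f(\X_{t-1},\Y_{t-1})$") does exactly this, since $\nrm{\Y_{t-1}}^2\le\frac1\eta\le\frac2\eta$. So $f_t \le (1-\tfrac\beta4)f_{t-1}$ for each $t$ in the window.

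Iterating this from $t$ down to $0$ yields $f_t \le (1-\beta/4)^t f_0 \le \exp(-\beta t/4) f_0$, and then multiplying by $2$ and invoking \cref{prop:initialize}(4), $f_0 = f(\Xi,\Yi) \le \tfrac12(1+\nu)^2\fnrm{\A}^2$, gives the second line of \cref{RHS}. Recalling $f_t = \tfrac12\fnrm{\A-\X_t\Y_t'}^2$ (from the residual definitions, $f_t=\fnrm{\Rt}^2$ with the $\tfrac12$ convention of \cref{sqlossgen}), the factor of $2$ converts $2f_t$ into $\fnrm{\A-\X_t\Y_t'}^2$, completing the bound. The main obstacle — really the only subtle point — is bookkeeping the index shifts between the three objective values $f(\X_{t-1},\Y_{t-1})$, $f(\X_t,\Y_{t-1})$, $f(\X_t,\Y_t)$ and making sure the norm conditions needed for each invocation of \cref{lemma1} (the $\frac1\eta$ bound for the descent step, the $\frac2\eta$ bound for the monotonicity step) are all supplied uniformly by \cref{cor:big}/\cref{prop:big} on the window $t\le T$; once the window and the good event are fixed, everything else is a two-line chain of inequalities.
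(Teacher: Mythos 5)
Your proposal is correct and follows essentially the same route as the paper's proof: \cref{cor:big} for the uniform lower bound on $\sigma_r(\X_t)$, \cref{lemma:invariant} plus \cref{PLlemma} for the PL inequality, and the descent bound of \cref{lemma1} to obtain the per-step contraction $f_t \le (1-\beta/4)f_{t-1}$, finished off with \cref{prop:initialize} to bound $f_0$. In fact you are slightly more explicit than the paper in invoking \cref{eq:lip2} (rather than \cref{eq:lip}) and in spelling out the monotonicity step $f(\X_t,\Y_{t-1}) \le f(\X_{t-1},\Y_{t-1})$ needed to chain the iterates, which the paper leaves implicit.
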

\begin{proof}%
  \cref{cor:big} implies that  $\sigma_{r}(\Xt)^2 \geq \frac{\beta}{4\eta}$ for $t=1, \dots, T$.   \Cref{lemma:invariant,PLlemma} imply since $\Xi$ is
  initialized in the column space of $\A$,  $\Xt$ remains in the column space of $\A$ for all $t$, and  
\begin{equation}
  \label{eq:pl}
  \begin{aligned}
 \fnrm{ \D{\Y} f(\Xt+, \Yt) }^2 &= \fnrm{ (\A' - \Yt \Xt+') \Xt+ }^2 \geq  \sigma_r(\Xt+)^2 \fnrm{ (\A' - \Yt \Xt+')}^2  \\
&\geq \frac{\beta}{4\eta} \fnrm{ \A - \Xt+ \Yt' }^2 = \frac{ \beta}{2\eta} f(\Xt+, \Yt).
\end{aligned}
\end{equation}
That is, a lower bound on $\sigma_{r}( \X_t )^2$ implies that the gradient step with respect to $\Y$ satisfies the Polyak-Lojasiewicz (PL)-equality\footnote{A function $f$ satisfies the PL-equality if for all $\vx \in \Real^m$, $f(\vx) - f(\vx^{*}) \leq \frac{1}{2m} \| \nabla f(\vx) \|^2$, where $f(\vx^{*}) = \min_{\vx} f(\vx)$}.

We can combine this PL inequality with the Lipschitz bound from \cref{lemma1} to derive the linear convergence rate. Indeed, by \cref{eq:lip}, 
\begin{align*}
  \label{standard}
  f(\Xt+, \Yt+) - f(\Xt+, \Yt) &\leq -\frac{\eta}{2}  \fnrm{ \D{\Y} f(\Xt+,\Yt) }^2 \leq - \frac{\beta}{4} f(\Xt+, \Yt) . 
\end{align*}
where the final inequality is \cref{eq:pl}.
Consequently, using \cref{prop:initialize},
\begin{multline*}
  f(\X_{T}, \Y_{T}) \leq (1 - {\beta}/{4}) f(\X_{T-1}, \Y_{T-1}) \leq \left(1 - \beta/4 \right)^{T} f(\Xi, \Yi) \\ \leq  \exp \left( - \beta T /4 \right) f(\Xi, \Yi). \nonumber
  \qedhere
\end{multline*}
\end{proof}

\begin{corollary}
\label{cor:main}
  Assume $\Xi$ and $\Yi$ are initialized as in \cref{ass:init}, with the stronger
  assumptions that $C \geq 4$ and $\nu \leq \frac{1}{2}$.
  Consider alternating gradient descent as in \cref{ass:agd} with
  \begin{equation}\label{eta:max}
    \eta \leq \frac{\beta}{\sqrt{32 f_0 \log(2f_0/\epsilon)}},
  \end{equation} 
  where $\beta$ is defined in \cref{b:define} and $f_0=f(\Xi,\Yi)$.
  Then with probability at least $1 - \delta$,
  with respect to the random initialization and
  $\delta$ defined in \cref{eq:delta}, it holds
  \begin{displaymath}
    \fnrm{ \A - \X{T}" \Y{T}' }^2 \leq \epsilon
    \qtext{at interation}
    T = \floor***{   \frac{\beta}{8\eta^2 {f_0}} }.
  \end{displaymath}
  Here $\rho$ is defined in \cref{eq:rho}.
  Using the upper bound for $\eta$ in \cref{eta:max}, the iteration
  complexity to reach an $\epsilon$-optimal loss value is
\begin{displaymath}
  T = \mathcal{O} \prn+( \prn+(\frac{\sigma_1(\A)}{\sigma_r(\A)})^2
  \frac{1}{\rho^2} \log\prn+( \frac{\fnrm{ \A }^2}{\epsilon} ) ) .
\end{displaymath}

\end{corollary}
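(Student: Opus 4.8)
The plan is to read \cref{cor:main} off from \cref{thm:main} with essentially no new work. \cref{thm:main} already supplies, on an event of probability at least $1-\delta$, the geometric decay $\fnrm{\A-\X_t\Y_t'}^2\le 2\exp(-\beta t/4)\,f_0$ for every $1\le t\le T$ with $T=\floor{\beta/(8\eta^2 f_0)}$ and $f_0=f(\Xi,\Yi)$. So all that remains is: (i) confirm that the step-size restriction \eqref{eta:max} imposed here is at least as strong as the one \cref{thm:main} needs; (ii) check that the prescribed $T$ is large enough that $2\exp(-\beta T/4)f_0\le\epsilon$; and (iii) unwind $T$ into the stated $\mathcal{O}$-bound.

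For (i), \cref{thm:main} requires $\eta\le\tfrac{9}{4C\nu\sigma_1(\A)}$, so I would check that \eqref{eta:max} implies this using two crude estimates: the computation proving the last bound of \cref{prop:initialize} also yields the matching lower bound $f_0\ge\tfrac12(1-\nu)^2\fnrm{\A}^2\ge\tfrac18\fnrm{\A}^2$ (here $\nu\le\tfrac12$ is used), and $\beta\le 1/C^2$ since $\rho<1$ and $\sigma_r(\A)\le\sigma_1(\A)$. Since the conclusion is vacuous unless $\epsilon$ is small, we may also assume $\log(2f_0/\epsilon)\ge1$. Then \eqref{eta:max} forces $\eta\le\beta/\sqrt{32 f_0}\le 1/(2C^2\sigma_1(\A))\le\tfrac{9}{4C\nu\sigma_1(\A)}$ because $C\ge4$ and $\nu<1$, so \cref{thm:main} applies verbatim on the same probability-$(1-\delta)$ event.

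For (ii), squaring \eqref{eta:max} gives $\eta^2\le\beta^2/(32 f_0\log(2f_0/\epsilon))$, hence $\beta/(8\eta^2 f_0)\ge(4/\beta)\log(2f_0/\epsilon)$, and therefore $T=\floor{\beta/(8\eta^2 f_0)}\ge(4/\beta)\log(2f_0/\epsilon)$; substituting $t=T$ into the geometric bound gives $\fnrm{\A-\X_T\Y_T'}^2\le 2\exp(-\beta T/4)f_0\le\epsilon$. The one subtlety is that the floor can shave off a single iteration; this is harmless because \cref{lemma1} (together with the norm bounds of \cref{cor:big}) shows the loss $f(\X_t,\Y_t)$ is non-increasing along the run, so it suffices that the $\epsilon$-threshold is reached at some iteration $\le T$, which the displayed bound on $T$ guarantees (alternatively one absorbs the rounding into the numerical constant). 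For (iii), take $\eta$ equal to its maximal admissible value so that $T=\floor{(4/\beta)\log(2f_0/\epsilon)}$, and substitute $\beta=\rho^2\sigma_r^2(\A)/(C^2\sigma_1^2(\A))$ with $C$ a fixed constant, together with $\tfrac18\fnrm{\A}^2\le f_0\le\tfrac12(1+\nu)^2\fnrm{\A}^2\le\tfrac98\fnrm{\A}^2$ from \cref{prop:initialize}; this yields $T=\mathcal{O}\big((\sigma_1(\A)/\sigma_r(\A))^2\,\rho^{-2}\,\log(\fnrm{\A}^2/\epsilon)\big)$.

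I do not anticipate a genuine obstacle: the argument is bookkeeping layered on top of \cref{thm:main}. The only places needing care are the step-size compatibility check in (i) — which is where the strengthened hypotheses $C\ge4$, $\nu\le\tfrac12$ and the order-of-magnitude estimate $f_0=\Theta(\fnrm{\A}^2)$ actually get used — and the rounding in the definition of $T$, handled by the monotonicity remark above.
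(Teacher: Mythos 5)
Your proposal is correct and follows essentially the same route as the paper: deduce the corollary from \cref{thm:main} by checking that \eqref{eta:max} implies the step-size condition $\eta \leq \frac{9}{4C\nu\sigma_1(\A)}$ (via the lower bound $f_0 \geq \tfrac12(1-\nu)^2\fnrm{\A}^2$ implicit in the proof of \cref{prop:initialize}, using $\nu\le\tfrac12$), then solve $2\exp(-\beta T/4)f_0\le\epsilon$ for $T=\floor{\beta/(8\eta^2 f_0)}$ and substitute $\beta$ to get the $\mathcal{O}$-bound. Your treatment of the floor/rounding is at least as careful as the paper's, which glosses over it entirely.
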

This corollary follows from \cref{thm:main} by solving for $\eta$ so that the RHS of  \cref{RHS} is at most $\epsilon,$ and then noting that $\eta \leq \frac{\beta}{\sqrt{32 f_0 \log(2f_0/\epsilon)}} $ implies that $\eta \leq \frac{9}{4 C \nu \sigma_1(\A)}$ when $\nu \leq \frac{1}{2},$ using the lower bound on $f_0$ from \cref{prop:initialize}.

Using this corollary recursively, we can prove that the loss value remains small for $T' \geq  \lfloor \frac{\beta}{8 \eta^2 f_0} \rfloor$, provided we increase the lower bound on $C$ by a factor of 2. The proof is in \cref{sec:proof4}. 
\begin{corollary}
\label{thm:big}
  Assume $\Xi$ and $\Yi$ are initialized as in \cref{ass:init}, with the stronger
  assumptions that $C \geq 8$ and $\nu \leq \frac{1}{2}$.
  Fix $\epsilon < 1/16$, and 
  consider alternating gradient descent as in \cref{ass:agd} with
  \begin{equation}\label{eta:max2}
    \eta \leq \frac{\beta}{\sqrt{32 f_0 \log(1/\epsilon)}},
  \end{equation} 
  where $\beta$ is defined in \cref{b:define} and $f_0=f(\Xi,\Yi)$.
  Then with probability at least $1 - \delta$,
  with respect to the random initialization and
  $\delta$ defined in \cref{eq:delta}, it holds for any $k \in \mathbb{N}$ that
  \begin{align*}
    \fnrm{ \A - \X{T} \Y{T}' }^2 &\leq \epsilon^k \fnrm{ \A - \Xi \Yi' }^2
    &\qtext{for}T &\geq  \sum_{\ell=0}^{k-1}
    \floor+{ \prn***(\frac{1}{4 \epsilon})^{\ell} \frac{\beta}{8\eta^2 f_0 }}. 
  \end{align*}
\end{corollary}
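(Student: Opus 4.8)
The plan is to iterate \cref{cor:main} (equivalently, the combined PL and descent argument behind \cref{thm:main}) over a sequence of phases, each time treating the iterate produced at the end of a phase as the initialization for the next; this is legitimate because AGD is a first‑order recursion with no memory. Write $f_0=f(\Xi,\Yi)$, set $\Delta_\ell=\floor+{(\tfrac{1}{4\epsilon})^{\ell}\,\tfrac{\beta}{8\eta^2 f_0}}$, and let $T_k=\sum_{\ell=0}^{k-1}\Delta_\ell$ be the start of phase $k$, so that the claimed family of bounds reads $\fnrm{\A-\X_T\Y_T'}^2\le\epsilon^k\fnrm{\A-\X_0\Y_0'}^2$ for all $T\ge T_k$. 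I would prove by induction on $k$ that, on the probability‑$(1-\delta)$ event of \cref{prop:initialize}, at time $T_k$: (a)~$\text{ColSpan}(\X_{T_k})\subseteq\text{ColSpan}(\A)$; (b)~$\sigma_1^2(\X_{T_k}),\,\sigma_1^2(\Y_{T_k})\le \tfrac{9}{16\eta}$; (c)~$\sigma_r(\X_{T_k})\ge 2^{-k}\sqrt{\beta/\eta}$; and (d)~$f(\X_{T_k},\Y_{T_k})\le\epsilon^{k} f_0$. Claim~(a) is automatic from \cref{lemma:invariant}, and once (b)--(d) hold, \cref{lemma1} makes the loss non‑increasing along AGD under~(b), so $f(\X_T,\Y_T)\le f(\X_{T_k},\Y_{T_k})\le\epsilon^k f_0$ for every $T\ge T_k$, which is the asserted bound after multiplying by $2$.

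The base case $k=0$ is \cref{prop:initialize} together with $C\ge8\ge4$. For the inductive step I would re-run \cref{prop:big} and the argument of \cref{thm:main} with time origin shifted to $T_k$ and $f_0$ replaced by $f_k:=f(\X_{T_k},\Y_{T_k})\le\epsilon^k f_0$. Using $f_k\le\epsilon^k f_0$ one checks $\Delta_k\le\tfrac{1}{32\eta^2 f_k}$ (for $k\ge1$ even $\Delta_k\le\tfrac{\beta}{32\eta^2 f_k}$; for $k=0$ this is $\Delta_0\le\floor+{\tfrac{1}{32\eta^2 f_0}}$, exactly as in \cref{cor:big}), so by~(b) \cref{prop:big} applies through the whole of phase $k$, yielding $\nrm{\X_t},\nrm{\Y_t}\le 1/\sqrt\eta$ and $\sigma_r(\X_t)\ge \sigma_r(\X_{T_k})-\sqrt{2(t-T_k)\eta f_k}$ there. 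The drift accumulated over phase $k$ is $\sqrt{2\Delta_k\eta f_k}\le 2^{-(k+1)}\sqrt{\beta/\eta}$, so $\sigma_r^2(\X_t)\ge \tfrac{\beta}{4^{k+1}\eta}$ for all $t$ in phase $k$, while the total drift over phases $0,\dots,k$ telescopes to a geometric series $<\sqrt{\beta/\eta}\le\tfrac{1}{C\sqrt\eta}$; with $\sigma_1(\X_0)\le\tfrac{3}{C\sqrt\eta}$, an even smaller bound on $\sigma_1(\Y_0)$, and $C\ge8$, this re-establishes (b) and (c) at $T_{k+1}$. Then \cref{lemma:invariant,PLlemma} give, on phase $k$, the PL inequality $\fnrm{\D{\Y}f(\X_{t+1},\Y_t)}^2\ge \tfrac{\beta}{4^{k+1}\eta}\fnrm{\A-\X_{t+1}\Y_t'}^2$; combining it with the descent estimate from \cref{lemma1} exactly as in the proof of \cref{thm:main} yields the per-step contraction $f(\X_{t+1},\Y_{t+1})\le(1-\beta/4^{k+1})f(\X_t,\Y_t)$, hence $f_{k+1}\le\exp(-\beta\Delta_k/4^{k+1})f_k$.

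It then remains to verify $\exp(-\beta\Delta_k/4^{k+1})\le\epsilon$, i.e.\ $\Delta_k\ge \tfrac{4^{k+1}}{\beta}\log(1/\epsilon)$. The step-size hypothesis gives $\tfrac{\beta}{8\eta^2 f_0}\ge\tfrac{4}{\beta}\log(1/\epsilon)$, so $\Delta_k$ is at least $\tfrac{4}{(4\epsilon)^{k}}\tfrac{1}{\beta}\log(1/\epsilon)$ up to the floor, and the ratio of what is available to what is needed equals $(16\epsilon)^{-k}$, which is $\ge1$ precisely because $\epsilon<1/16$. For $k\ge1$ this leaves a multiplicative slack of $(16\epsilon)^{-k}$ that absorbs the floor, and the case $k=0$ is \cref{cor:main} applied with target loss $2\epsilon f_0$ (since $\log(2f_0/(2\epsilon f_0))=\log(1/\epsilon)$, the step-size condition is the same one), with the floor/constant bookkeeping handled just as there. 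This closes the induction and hence establishes the displayed chain of bounds.

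The main obstacle is this bookkeeping in the inductive step. The lower bound on $\sigma_r^2(\X_t)$ driving the PL inequality (and hence the linear rate) degrades by a factor $4$ from one phase to the next, so one must confirm that the phase lengths grow by the strictly larger factor $\tfrac{1}{4\epsilon}$ — which is exactly why the hypothesis is strengthened to $\epsilon<1/16$ — in order to still buy a full factor‑$\epsilon$ loss reduction in each phase; and simultaneously the accumulated displacement $\sum_\ell\sqrt{2\Delta_\ell\eta f_\ell}$ must remain a convergent geometric series bounded by $\sqrt{\beta/\eta}$, so that $\sigma_r(\X_t)$ never reaches $0$ and $\sigma_1(\X_t),\sigma_1(\Y_t)$ stay $\le \tfrac{1}{2\sqrt\eta}$ for all $t$ — the latter being precisely why the hypothesis on $C$ is strengthened from $C\ge4$ to $C\ge8$. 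Everything else is a time-shifted replay of \cref{prop:big,prop:initialize,thm:main}.
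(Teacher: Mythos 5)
Your proposal is correct and follows essentially the same route as the paper's proof: a phase-by-phase restart of \cref{cor:main} (via \cref{prop:big,lemma:invariant,PLlemma}) in which the effective PL constant degrades by a factor of $4$ per phase while the phase length grows by the factor $\tfrac{1}{4\epsilon}$, with $\epsilon<1/16$ supplying the needed slack in the step-size/contraction check and $C\geq 8$ making the accumulated drift a geometric series that keeps the singular-value bounds valid for all time. Your write-up is in fact somewhat more explicit than the paper's about the inductive invariants and the per-phase verification, but the decomposition and the key estimates are the same.
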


\section{Numerical experiments }

We perform an  illustrative numerical experiment to demonstrate both the theoretical and practical benefits of the proposed initialization.
We use gradient descent \emph{without} alternating to demonstrate that this theoretical assumption makes little difference in practice.
We factorize a rank-5 ($r=5$) matrix of size $100 \times 100$. The matrix
is constructed as $\A = \Mx{U} \Mx{\Sigma} \Mx{V}'$ with $\Mx{U}$ and $\Mx{V}$ random $100 \times 5$ orthonormal matrices and singular value
ratio $\sigma_r(\A)/\sigma_1(\A) = 0.9$.
The same matrix is used for each set of experiments.
We compare four initializations:
\begin{align*}
  \text{Proposed:} &&
  \X{0} & = \frac{1}{\sqrt{\eta}\sqrt{d}C\sigma_1} \A \Mx{\Phi}{(n \times d)} &
  \Y{0} & = \frac{\sqrt{\eta}D\sigma_1}{\sqrt{n}} \Mx{\Phi}{(n \times d)}
  \\
  \text{ColSpan($\A$):} &&
  \X{0} & = \frac{1}{10\sqrt{d}} \A \Mx{\Phi}{(n \times d)} &
  \Y{0} & = \frac{1}{10\sqrt{n}} \Mx{\Phi}{(n \times d)}   
  \\
  \text{Random:} &&
  \X{0} & = \frac{1}{10\sqrt{m}} \Mx{\Phi}{(m \times d)} &
  \Y{0} & = \frac{1}{10\sqrt{n}} \Mx{\Phi}{(n \times d)}   
  \\
  \text{Random-Asym:} &&
  \X{0} & = \frac{1}{\sqrt{\eta}}\frac{1}{10\sqrt{m}} \Mx{\Phi}{(m \times d)} &
  \Y{0} & = \sqrt{\eta}\frac{1}{10\sqrt{n}} \Mx{\Phi}{(n \times d)}   
\end{align*}
Here, $\Mx{\Phi}$ denotes a random matrix with independent entries from $\Normal(0,1)$.
The random initialization is what is commonly used and analyzed.
We include ColSpan($\A$) to understand the impact of starting in the
column space of $\A$ for $\X{0}$.
Our proposed initialization (\cref{ass:init}) combines this with an asymmetric scaling.
In all experiments we use the defaults $C=4$ and $D=C\nu/9$ with $\nu=1e{-}10$
for computing the proposed initialization as well as the theoretical step size.
We assume $\sigma_1 = \sigma_1(\A)$ is known in these cases. (In practice,
a misestimate of $\sigma_1$ can be compensated with a different value for $C$.)

\Cref{fig:exp1} shows how the proposed method performs using the
theoretical step size and compared to its theory.
We also {compare our initialization to three} other initializations with the same step size.
We consider two different levels of over-factoring, choosing $d=10$ (i.e, $2r$) and $d=6$ (i.e., $r+1$).
All methods perform better for larger $d=10$.
The theory for the proposed initialization underestimates its performance (there may be room for further improvement) but still shows a {stark and consistent advantage compared to the performance of the standard initialization, as well as compared to initializing in the column span of $\A$ but not asymmetrically, or initializing asymmetrically but not in the column span of $\A$.}

\begin{figure}[!ht]
  \centering
  \includegraphics{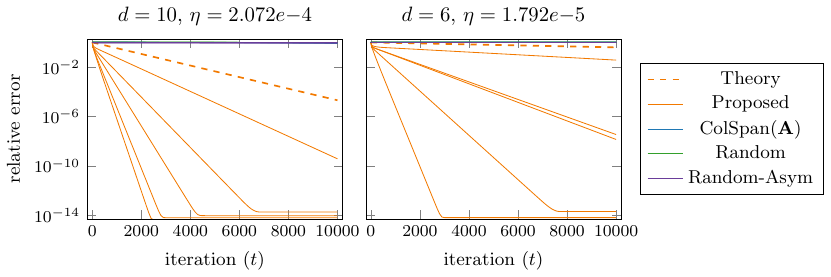}
  \caption{\textbf{Theoretical advantage of proposed initialization.} Gradient descent (non-alternating) for $\mathbf{A} \in \mathbb{R}^{100 \times 100}$ with $\text{rank}(\mathbf{A})=5$, $\sigma_1 = 1$ and $\sigma_r=0.9$. The plot shows five runs with each type of initialization.}
  \label{fig:exp1}
\end{figure}

\Cref{fig:exp2} shows how the three initializations compare with different
step lengths. The advantage of the proposed initialization persists even
with step lengths that are larger than that proposed by the theory, up until
the step length is too large for any method to converge ($\eta=1$).
We emphasize that we are showing standard gradient descent,
not alternating gradient descent. (There is no major difference between
the two in our experience.)

\begin{figure}[!ht]
  \centering
  \includegraphics{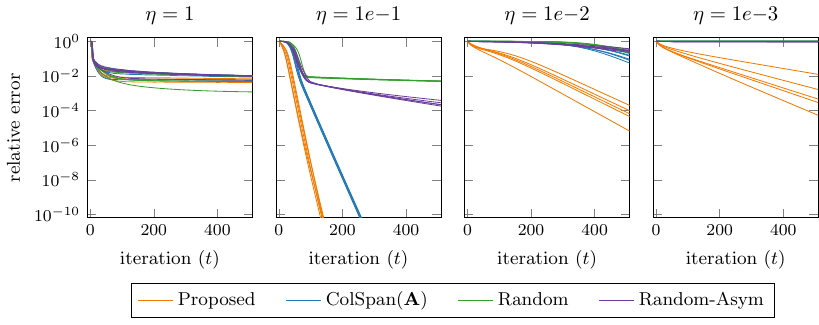}
  \caption{\textbf{Advantage of proposed initialization for different step lengths.} Gradient descent (non-alternating) for $\mathbf{A} \in \mathbb{R}^{100 \times 100}$ with $\text{rank}(\mathbf{A})=5$, $\sigma_1 = 1$ and $\sigma_r=0.9$. The plot shows five runs with each type of initialization.}
  \label{fig:exp2}
\end{figure}

Although the theory requires that $\A$ be exactly rank-$r$, \cref{fig:exp3} shows that the
proposed initialization still maintains its advantage for
noisy problems that are only approximately rank-$r$.

\begin{figure}[!ht]
  \centering
  \includegraphics{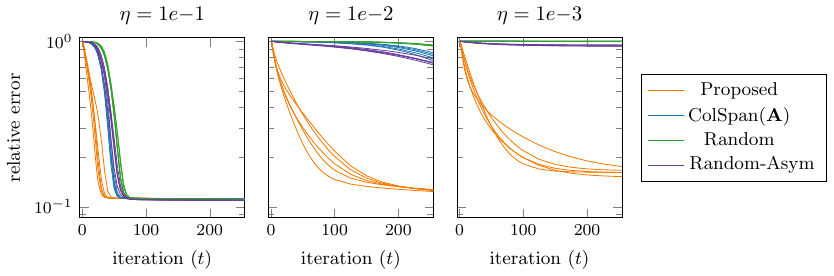}
  \caption{\textbf{Advantage of proposed initialization for noisy problems.} Gradient descent (non-alternating) for $\mathbf{A} \in \mathbb{R}^{100 \times 100}$ with $\text{rank}(\mathbf{A})\approx 5$ (10\% noise), $\sigma_1 = 1$ and $\sigma_r=0.9$. The plot shows five runs with each type of initialization.}
  \label{fig:exp3}
\end{figure}

\bibliographystyle{alpha}
% \bibliography{bib}

\begin{thebibliography}{GWB{\etalchar{+}}17}

\bibitem[ABC{\etalchar{+}}22]{ahn2022learning}
Kwangjun Ahn, S{\'e}bastien Bubeck, Sinho Chewi, Yin~Tat Lee, Felipe Suarez,
  and Yi~Zhang.
\newblock Learning threshold neurons via the "edge of stability".
\newblock {\em arXiv preprint arXiv:2212.07469}, December 2022.

\bibitem[ACHL19]{arora2019implicit}
Sanjeev Arora, Nadav Cohen, Wei Hu, and Yuping Luo.
\newblock Implicit regularization in deep matrix factorization.
\newblock {\em Advances in Neural Information Processing Systems}, 32, 2019.

\bibitem[BKS16]{bhojanapalli2016dropping}
Srinadh Bhojanapalli, Anastasios Kyrillidis, and Sujay Sanghavi.
\newblock Dropping convexity for faster semi-definite optimization.
\newblock In {\em Conference on Learning Theory}, pages 530--582. PMLR, 2016.

\bibitem[BM03]{burer2003nonlinear}
Samuel Burer and Renato~DC Monteiro.
\newblock A nonlinear programming algorithm for solving semidefinite programs
  via low-rank factorization.
\newblock {\em Mathematical Programming}, 95(2):329--357, 2003.

\bibitem[BM05]{burer2005local}
Samuel Burer and Renato~DC Monteiro.
\newblock Local minima and convergence in low-rank semidefinite programming.
\newblock {\em Mathematical programming}, 103(3):427--444, 2005.

\bibitem[CB22]{chen2022gradient}
Lei Chen and Joan Bruna.
\newblock On gradient descent convergence beyond the edge of stability.
\newblock {\em arXiv preprint arXiv:2206.04172}, June 2022.

\bibitem[CCFM19]{chen2019gradient}
Yuxin Chen, Yuejie Chi, Jianqing Fan, and Cong Ma.
\newblock Gradient descent with random initialization: Fast global convergence
  for nonconvex phase retrieval.
\newblock {\em Mathematical Programming}, 176:5--37, 2019.

\bibitem[CGMR20]{chou2020gradient}
Hung-Hsu Chou, Carsten Gieshoff, Johannes Maly, and Holger Rauhut.
\newblock Gradient descent for deep matrix factorization: Dynamics and implicit
  bias towards low rank.
\newblock {\em arXiv preprint arXiv:2011.13772}, 2020.

\bibitem[CKL{\etalchar{+}}21]{cohen2021gradient}
Jeremy~M Cohen, Simran Kaur, Yuanzhi Li, J~Zico Kolter, and Ameet Talwalkar.
\newblock Gradient descent on neural networks typically occurs at the edge of
  stability.
\newblock {\em arXiv preprint arXiv:2103.00065}, February 2021.

\bibitem[DHL18]{du2018algorithmic}
Simon~S Du, Wei Hu, and Jason~D Lee.
\newblock Algorithmic regularization in learning deep homogeneous models:
  Layers are automatically balanced.
\newblock {\em Advances in neural information processing systems}, 31, 2018.

\bibitem[GHJY15]{ge2015escaping}
Rong Ge, Furong Huang, Chi Jin, and Yang Yuan.
\newblock Escaping from saddle points—online stochastic gradient for tensor
  decomposition.
\newblock In {\em Conference on learning theory}, pages 797--842. PMLR, 2015.

\bibitem[GWB{\etalchar{+}}17]{gunasekar2017implicit}
Suriya Gunasekar, Blake~E Woodworth, Srinadh Bhojanapalli, Behnam Neyshabur,
  and Nati Srebro.
\newblock Implicit regularization in matrix factorization.
\newblock {\em Advances in Neural Information Processing Systems}, 30, 2017.

\bibitem[HKD20]{hong2020generalized}
David Hong, Tamara~G Kolda, and Jed~A Duersch.
\newblock Generalized canonical polyadic tensor decomposition.
\newblock {\em SIAM Review}, 62(1):133--163, 2020.

\bibitem[JCD22]{jiang2022algorithmic}
Liwei Jiang, Yudong Chen, and Lijun Ding.
\newblock Algorithmic regularization in model-free overparametrized asymmetric
  matrix factorization.
\newblock {\em arXiv preprint arXiv:2203.02839}, March 2022.

\bibitem[JJKN17]{jain2017global}
Prateek Jain, Chi Jin, Sham Kakade, and Praneeth Netrapalli.
\newblock Global convergence of non-convex gradient descent for computing
  matrix squareroot.
\newblock In {\em Artificial Intelligence and Statistics}, pages 479--488.
  PMLR, 2017.

\bibitem[KH20]{kolda2020stochastic}
Tamara~G Kolda and David Hong.
\newblock Stochastic gradients for large-scale tensor decomposition.
\newblock {\em SIAM Journal on Mathematics of Data Science}, 2(4):1066--1095,
  2020.

\bibitem[LM00]{lm2000}
Beatrice Laurent and Pascal Massart.
\newblock Adaptive estimation of a quadratic functional by model selection.
\newblock {\em Annals of statistics}, pages 1302--1338, 2000.

\bibitem[LMZ18]{li2018algorithmic}
Yuanzhi Li, Tengyu Ma, and Hongyang Zhang.
\newblock Algorithmic regularization in over-parameterized matrix sensing and
  neural networks with quadratic activations.
\newblock In {\em Conference On Learning Theory}, pages 2--47. PMLR, 2018.

\bibitem[RV09]{rudelson2009smallest}
Mark Rudelson and Roman Vershynin.
\newblock Smallest singular value of a random rectangular matrix.
\newblock {\em Communications on Pure and Applied Mathematics: A Journal Issued
  by the Courant Institute of Mathematical Sciences}, 62(12):1707--1739, 2009.

\bibitem[SWW17]{sanghavi2017local}
Sujay Sanghavi, Rachel Ward, and Chris~D White.
\newblock The local convexity of solving systems of quadratic equations.
\newblock {\em Results in Mathematics}, 71:569--608, 2017.

\bibitem[TBS{\etalchar{+}}16]{tu2016low}
Stephen Tu, Ross Boczar, Max Simchowitz, Mahdi Soltanolkotabi, and Ben Recht.
\newblock Low-rank solutions of linear matrix equations via {Procrustes} flow.
\newblock In {\em International Conference on Machine Learning}, pages
  964--973. PMLR, 2016.

\bibitem[Ver10]{vershynin2010introduction}
Roman Vershynin.
\newblock Introduction to the non-asymptotic analysis of random matrices.
\newblock In {\em Compressed Sensing}, pages 210--268. Cambridge University
  Press, may 2010.

\bibitem[WCZT21]{wang2021large}
Yuqing Wang, Minshuo Chen, Tuo Zhao, and Molei Tao.
\newblock Large learning rate tames homogeneity: Convergence and balancing
  effect.
\newblock {\em arXiv preprint arXiv:2110.03677}, 2021.

\bibitem[YD21]{ye2021global}
Tian Ye and Simon~S Du.
\newblock Global convergence of gradient descent for asymmetric low-rank matrix
  factorization.
\newblock {\em Advances in Neural Information Processing Systems},
  34:1429--1439, 2021.

\bibitem[ZL16]{zheng2016convergence}
Qinqing Zheng and John Lafferty.
\newblock Convergence analysis for rectangular matrix completion using
  {Burer-Monteiro} factorization and gradient descent.
\newblock {\em arXiv preprint arXiv:1605.07051}, 2016.

\bibitem[ZWL15]{zhao2015nonconvex}
Tuo Zhao, Zhaoran Wang, and Han Liu.
\newblock Nonconvex low rank matrix factorization via inexact first order
  oracle.
\newblock {\em Advances in Neural Information Processing Systems},
  458:461--462, 2015.

\end{thebibliography}
\newcommand{\etalchar}[1]{$^{#1}$}

\clearpage
\appendix

\section{Non-asymptotic singular value bounds}
\label{sec:non-asymp-sing}

\begin{prop}[Theorem 1.1 of \cite{rudelson2009smallest}]
\label{prop:smallest}
  Let $\A$ be an $d \times r$ matrix, $d \geq r$, whose entries are
  independently drawn from $\mathcal{N}(0,1)$. Then
  for every $\tau \geq 0$,
  \begin{displaymath}
\Prob\left( \sigma_r(\A) \leq \tau (\sqrt{d} - \sqrt{r-1}) \right) \leq (C_1 \tau)^{d - r + 1} + e^{-C_2 d}
    \end{displaymath}
where $C_1 , C_2 > 0$ are universal constants.
\end{prop}

\begin{prop}[\cite{vershynin2010introduction}]
\label{prop:v}
  Let $\A$ be an $d \times r$ matrix whose entries are
  independently drawn from $\mathcal{N}(0,1)$. Then
  for every $t \geq 0$, with probability at least
  $1-\exp(-t^2/2)$, we have
  \begin{displaymath}
\sigma_r(\A) \geq   \sqrt{d} - \sqrt{r} - t
    \end{displaymath}
    and for every $t \geq 0$, with probability at least
  $1-\exp(-t^2/2)$, we have
  \begin{displaymath}
\sigma_1(\A) \leq   \sqrt{d} + \sqrt{r} + t
    \end{displaymath}
\end{prop}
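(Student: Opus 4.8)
The plan is to prove this by the standard two-step route for the extreme singular values of a Gaussian matrix: first control the \emph{expected} smallest and largest singular values via a Gaussian comparison inequality, then upgrade to the stated high-probability bounds using Gaussian Lipschitz concentration, and finally rescale to accommodate a general entry variance. (I read the two displayed inequalities as statements about the random quantities $\sigma_r(A)$ and $\sigma_1(A)$ themselves; the leading $\mathbb{E}$ is superfluous once concentration is available.)

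For the expectation bounds I would write $\sigma_1(A) = \max_{u \in S^{d-1},\, v \in S^{r-1}} \langle u, A v \rangle$ and $\sigma_r(A) = \min_{v \in S^{r-1}} \max_{u \in S^{d-1}} \langle u, A v \rangle$, and regard $X_{u,v} := \langle u, A v \rangle = \sum_{i,j} A_{ij}\, u_i v_j$ as a centered Gaussian process on $S^{d-1} \times S^{r-1}$. Comparing $X$ with the auxiliary process $Y_{u,v} := \langle g, u \rangle + \langle h, v \rangle$, where $g \sim \mathcal{N}(0, I_d)$ and $h \sim \mathcal{N}(0, I_r)$ are independent, a short covariance computation gives $\mathbb{E}(X_{u,v} - X_{u',v'})^2 \le \mathbb{E}(Y_{u,v} - Y_{u',v'})^2$ for all index pairs (indeed $\mathbb{E}(Y_{u,v}-Y_{u',v'})^2 - \mathbb{E}(X_{u,v}-X_{u',v'})^2 = 2(1-\langle u,u'\rangle)(1-\langle v,v'\rangle) \ge 0$), with equality whenever $u = u'$ or $v = v'$. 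These are exactly the hypotheses of the Sudakov--Fernique inequality (for the plain maximum defining $\sigma_1$) and of Gordon's min--max comparison inequality (for $\sigma_r$). Since $\mathbb{E}\max_{u,v} Y_{u,v} = \mathbb{E}\nrm{g} + \mathbb{E}\nrm{h}$ and $\mathbb{E}\min_v \max_u Y_{u,v} = \mathbb{E}\nrm{g} - \mathbb{E}\nrm{h}$, and using the elementary estimates $\mathbb{E}\nrm{h} \le \sqrt{r}$ together with the monotonicity of $k \mapsto \sqrt{k} - \mathbb{E}\nrm{g_k}$, this yields the classical bounds $\sqrt{d} - \sqrt{r} \le \mathbb{E}\, \sigma_r(A) \le \mathbb{E}\, \sigma_1(A) \le \sqrt{d} + \sqrt{r}$; this is Theorem~5.32 of \cite{vershynin2010introduction}, which in practice I would simply cite.

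For the concentration step I would use that $A \mapsto \sigma_1(A)$ and $A \mapsto \sigma_r(A)$ are each $1$-Lipschitz with respect to the Frobenius norm, since $|\sigma_k(A) - \sigma_k(B)| \le \nrm{A - B}_{\mathrm{op}} \le \fnrm{A - B}$ by Weyl's perturbation inequality. Identifying $A$ with a standard Gaussian vector in $\mathbb{R}^{dr}$ and invoking the Gaussian concentration inequality $\mathbb{P}\bigl(f(A) - \mathbb{E} f(A) \ge t\bigr) \le e^{-t^2/2}$, valid for any $1$-Lipschitz $f$: taking $f = \sigma_1$ gives $\sigma_1(A) \le \mathbb{E}\, \sigma_1(A) + t \le \sqrt{d} + \sqrt{r} + t$ with probability at least $1 - e^{-t^2/2}$, and taking $f = -\sigma_r$ gives $\sigma_r(A) \ge \mathbb{E}\, \sigma_r(A) - t \ge \sqrt{d} - \sqrt{r} - t$ with probability at least $1 - e^{-t^2/2}$. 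Finally, if the entries have variance $v$ instead, I would write $A = \sqrt{v}\, A'$ with $A'$ standard Gaussian, note $\sigma_k(A) = \sqrt{v}\, \sigma_k(A')$, and divide both bounds by $\sqrt{v}$, which is the asserted scaling.

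The only genuinely non-routine ingredient is the Gaussian comparison step --- setting up the auxiliary process, checking the covariance inequality, and invoking Gordon's (and the Sudakov--Fernique) comparison theorem; everything else is routine Lipschitz-concentration bookkeeping and an elementary rescaling. Since the proposition is quoted essentially verbatim from \cite{vershynin2010introduction}, a fully rigorous treatment reduces to pointing to Theorem~5.32 there together with its concentration corollary.
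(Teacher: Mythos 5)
Your proof is correct and matches the source the paper relies on: the paper gives no proof of its own for this proposition, simply citing \cite{vershynin2010introduction}, and your argument (Sudakov--Fernique and Gordon comparison for the expectations, Gaussian Lipschitz concentration via Weyl's inequality for the tail bounds, and rescaling by $\sqrt{v}$) is exactly the standard proof of Theorem~5.32 and Corollary~5.35 there. Your reading of the stray $\mathbb{E}$ in the statement as a typo --- the bounds are meant for the random quantities $\sigma_r(A)$ and $\sigma_1(A)$ themselves --- is also the correct interpretation, and is how the proposition is used in \cref{prop:initialize}.
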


{\color{magenta}

\begin{prop}[Lemma 1 of \cite{lm2000}]
\label{prop:lm2000}
Let $a_1, \dots, a_n$ be nonnegative, and set 
$$
\| \Vc{a} \|_{\infty} = \sup_{i=1,\dots, n} |a_i|, \quad \| \Vc{a} \|_2^2 = \sum_{i=1}^n a_i^2
$$
For independent and identically-distributed $Z_i \sim {\cal N}(0,a_i),$ let
$$
X = \sum_{i=1}^n (Z_i^2 - a_i).
$$
Then the following inequalities hold for any positive t:
\begin{eqnarray}
\Prob(X \geq 2 \| \Vc{a} \|_2 \sqrt{t} + 2 \| \Vc{a} \|_{\infty} t) &\leq e^{-t} \nonumber \\
\Prob(X \leq -2 \| \Vc{a} \|_2 \sqrt{t}) &\leq e^{-t}
\end{eqnarray}
\end{prop}

}

\section{Proof of \cref{lemma1}}
\label{sec:proof-lemma1}

{ The proof of \cref{lemma1} is a direct calculation:
\begin{align*}
  f(\Xt+,\Yt) &= \frac{1}{2} \fnrm{\A-\Xt+\Yt'}^2 \\
  &= \frac{1}{2}\fnrm{\A - (\Xt - \eta \Gt)\Yt'}^2\\
  &= \frac{1}{2}\fnrm{\A - \Xt\Yt' + \eta \Gt\Yt'}^2\\
  &= \frac{1}{2} \fnrm{\A-\Xt\Yt'}^2 + \frac{1}{2} \fnrm{\eta \Gt\Yt'}^2
  - \eta \trace[ (\Xt\Yt'-\A) \prn(\Gt\Yt')']\\
  &= \frac{1}{2} \fnrm{\A-\Xt\Yt'}^2 + \frac{1}{2} \fnrm{\eta \Gt\Yt'}^2
  - \eta \trace[ \underbrace{(\Xt\Yt'-\A) \Yt}_{\Gt} (\Gt)^{\Tr}]\\
  &= f(\Xt,\Yt) + \frac{\eta}{2} \fnrm{\Gt\Yt'}^2 - \eta \fnrm{\Gt}^2\\
  &\leq f(\Xt,\Yt) + \frac{\eta^2}{2} \fnrm{\Gt}^2 \nrm{\Yt}^2 - \eta \fnrm{\Gt}^2\\
  &\leq f(\Xt,\Yt) + \frac{\eta}{2} \fnrm{\Gt}^2 - \eta \fnrm{\Gt}^2\\
  &\leq f(\Xt,\Yt) - \frac{\eta}{2} \fnrm{\Gt}^2. \qedhere
\end{align*}
}  

\section{Proof of \cref{prop:big}}
\label{sec:proof1}
First, observe that by assumption, $\nrm{ \Xi}^2, \nrm{\Yi }^2 \leq \frac{9}{16 \eta} \leq \frac{1}{\eta}.$  Now,  suppose that that $\nrm{ \Xi }^2, \nrm{ \Yi }^2  \leq \frac{9}{16\eta}$ and
$\nrm{ \Xt }^2, \nrm{ \Yt }^2  \leq \frac{1}{\eta}$ for $t=0,\dots T-1$, and 
$1 \leq T \leq \lfloor \frac{1}{32\eta^2 {f_0}} \rfloor$.
Then by \cref{lemma1}, 
\begin{align}
     \sum_{t=0}^{T-1} \fnrm***{ \D{\X} f(\Xt, \Yt) }^2 \leq \frac{2}{\eta} f_0.
\end{align}

Hence,
\begin{align}
     \nrm{ \X{T} - \Xi }  &\leq \eta \nrm***{ \sum_{t=0}^{T-1} \D{\X} f(\Xt, \Yt) }  \leq \eta \fnrm***{ \sum_{t=0}^{T-1} \D{\X} f(\Xt, \Yt) } \nonumber \\
     &\leq \eta \sqrt{ \sum_{t=0}^{T-1} \nrm{ \D{\X} f(\Xt, \Yt) }^2_F } \leq \eta \sqrt{\frac{2}{\eta} f_0}  = \sqrt{2 \eta f_0}.
\end{align}
Then, for $T \leq T_*,$ $\nrm{ \X{T} } \leq \nrm{\Xi} + \nrm{ \X{T} - \Xi } \leq \frac{3}{4\sqrt{\eta}} + \sqrt{2T \eta f_0} \leq \frac{1}{\sqrt{\eta}}.$  It follows that $\nrm{ \Xt }^2 \leq \frac{1}{\eta}$ for $t=0,\dots T$.  Using  \cref{lemma1} again, repeating the same argument, 
\begin{align*}
    \nrm{ \Y{t} } &\leq \frac{1}{\sqrt{\eta}}, \quad t= 0, \dots, T.
\end{align*}
Iterate the induction until $T = T_* = \lfloor \frac{1}{32\eta^2 {f_0}} \rfloor,$ to obtain $\nrm{ \Xt }^2, \nrm{ \Yt }^2 \leq \frac{1}{\eta}$ for $t =  1,\dots, T_*$.

Because $\| \X_{T} - \Xi \| \leq \sqrt{2 \eta T f_0}$ for $T \leq T_* = \lfloor \frac{1}{32\eta^2 {f_0}} \rfloor,$
\begin{align*}
\sigma_r(\X{T}) &\geq \sigma_r(\Xi) - \nrm{ \X{T} - \Xi }; \quad \quad \sigma_1(\X{T}) \leq \sigma_1(\Xi) + \nrm{ \X{T} - \Xi }. \nonumber 
\end{align*}
A similar argument applies to achieve the stated bounds for $\sigma_r(\Y{T})$ and $\sigma_1(\Y{T})$.

\section{Proof of \cref{prop:initialize}}
\label{sec:proof2}
{Recall the initialization in \cref{ass:init}:
$\RX,\RY \in \Real[n,d]$ are random matrices with \iid $\Normal(0,1/d)$ and $\Normal(0,1/n)$ entries, respectively.
  Fix $C \geq 1$, $\nu < 1,$ and $D \leq \frac{C}{9}\nu,$ and let 
  \begin{equation}\nonumber
    \Xi = \frac{1}{\eta^{1/2} \, C \, \sigma_1(\A)} \, \A \RX,
    \qtext{and} 
    \Yi = \eta^{1/2} \, D \, \sigma_1(\A) \, \RY.    
  \end{equation}
  The factor matrices each have $d \geq r$ columns.
}
Write the SVD $\A = \Mx{U}{m \times r} \Mx{\Sigma}{r \times r} \Mx{V}{r \times n}'$
so that 
$\A \RX =  \Mx{U}{m \times r} \Mx{\Sigma}{r \times r} (\Mx{V}' \RX)$.
Note that $\Mx{V}' \RX \in \mathbb{R}^{r \times d}$ has i.i.d.\@ Gaussian entries $\Normal(0, \frac{1}{d})$. 
By \cref{prop:smallest}, with probability at least $1 - (C_1 \tau)^{d - r + 1} - e^{-C_2 d}$,
\begin{equation}
  \label{eq1a}
 \sigma_r(\Mx{V}' \RX) \geq \tau \left( 1 - \frac{\sqrt{r-1}}{\sqrt{d}} \right)
\end{equation}
  { 
  On the other hand,  \cref{prop:v} implies that with probability at least $1 - e^{-r/2} - e^{-d/2}$, 
 \begin{align}
 \label{e1}
\sigma_1(\Mx{V}' \RX) &\leq \prn**(1+ \frac{2\sqrt{r}}{\sqrt{d}}) \leq 3
\end{align}
and
\begin{align}
\label{e2}
\sigma_1(\RY) &\leq \prn**(1+ \frac{2\sqrt{d}}{\sqrt{n}}) \leq 3.
\nonumber
\end{align}
If \cref{eq1a,e1} hold, then
\begin{equation}\label{e3}
  \frac{\tau \left( 1 - \frac{\sqrt{r-1}}{\sqrt{d}} \right)}{\sqrt{\eta}C \sigma_1(\A)} \sigma_r(\A)
  \leq \sigma_r(\Xi)
  \leq \sigma_1(\Xi)
  \leq   \frac{3}{\sqrt{\eta}C}.
\end{equation}
Now, note that \cref{e3} implies,
\begin{align}
2 f(\Xi, \Yi) &= \fnrm{ \A - \Xi \Yi' }^2  \nonumber \\
&\leq (\fnrm{ \A  } + \fnrm{ \Xi \Yi' })^2 \nonumber \\
&\leq \prn*(\fnrm{ \A } + \frac{D}{C} \| \Mx{\Sigma}{r \times r} \| \sigma_1 (\Mx{V}' \RX) \sigma_1 (\RY) )^2\nonumber \\
&\leq \prn*(\fnrm{ \A } + \nu \sigma_1(\A))^2 \nonumber \\
&\leq (1 + \nu)^2 \fnrm{ \A }^2 \nonumber
\end{align}

Finally, we can produce a lower bound as follows on $f(\Xi, \Yi)$ as follows:
\begin{align}
2 f(\Xi, \Yi) = \fnrm*{ \A - \Xi \Yi' }^2  &= \fnrm{ \A }^2 + \fnrm{ \Xi \Yi' }^2 - 2 \text{Tr}(\Yi^T \A \Xi) \nonumber \\
&\geq \fnrm{ \A }^2 - 2 \text{Tr}(\Yi' \A' \Xi) \nonumber \\
&= \fnrm{ \A }^2 - 2 \frac{D}{C} \text{Tr}(\RY' \A' A \RX) \nonumber \\
&= \fnrm{ \A }^2 - 2 \frac{D}{C} \text{Tr}
(\RY' \Mx{V} \Mx{\Sigma} \Mx{U}'\Mx{U} \Mx{\Sigma} (\Mx{V}' \RX)) \nonumber \\
&= \fnrm{ \A }^2 - 2 \frac{D}{C} \text{Tr}
(\underbrace{\RY' \Mx{V}}_{\Mx{G}{1}'} \underbrace{\Mx{\Sigma}^2}_{\Mx{\Lambda}}
\underbrace{\Mx{V}' \RX}_{\Mx{G}{2}}) \nonumber %
\end{align}
where $\Mx{\Lambda} := \Mx{\Sigma}^2$ is an $ \times r$ diagonal matrix with diagonal entries $\sigma_1(\A)^2, \dots, \sigma_r(\A)^2$, and due to the invariance of Gaussian measure, $G_1 := \Mx{V}'\RX$ and $\Mx{G}{2} :=  \Mx{V}'\RY$ are independent $r \times d$ Gaussian matrices with i.i.d. ${\cal N}(0,1/d)$ and ${\cal N}(0,1/n)$ entries (denoted by $g_1(j,k)$ and $g_2(j,k)$), respectively.

Observe now that
\begin{align}
  \text{Tr}(\Mx{G}{2}' \Mx{\Lambda} \Mx{G}{1})
  &= \sum_{j=1}^r \sum_{k=1}^d  \sigma^2_k g_2(j,k) g_1(j,k)
  \nonumber \\
&= \frac{1}{4} \sum_{j=1}^r \sum_{k=1}^d  \sigma^2_k ( g_2(j,k)+ g_1(k,j))^2 -  \frac{1}{4} \sum_{j=1}^r \sum_{k=1}^d  \sigma^2_k ( g_2(j,k) - g_1(k,j))^2 \nonumber .
\end{align}
The first equality is a calculation, and the second equality simply applies the numerical identity 
\begin{equation*}
  ab = \frac{1}{4}((a+b)^2 - (a-b)^2), \quad \forall a,b.
\end{equation*}
Therefore, we find that 
\begin{equation*}
  2 f(\X_0, \Y_0) \geq \fnrm{ \A }^2 - \frac{\nu}{9}  \sum_{j=1}^d \sum_{k=1}^r  Z^2_{j,k} - \frac{\nu}{9} \sum_{j=1}^d \sum_{k=1}^r  (Z'_{j,k})^2
\end{equation*}
where $Z_{j,k}$ are independent ${\cal N}(0, \sigma_k^2 (\frac{1}{n} + \frac{1}{d}))$ variables.  Similarly, $Z_{j,k}'$ are independent ${\cal N}(0, \sigma_k^2 (\frac{1}{n} + \frac{1}{d}))$ variables.   Applying  \cref{prop:lm2000} lower bound gives
\begin{equation*}
  \Prob \prn+{
    \sum_{j=1}^d \sum_{k=1}^r  Z^2_{j,k} \geq
    d \prn+(\frac{1}{d} {+} \frac{1}{n}) \fnrm{\A}^2
    + 2 \sqrt{d \prn+(\frac{1}{d} {+} \frac{1}{n})^2 \sum_{i=1}^r  \sigma_i^4 }  \sqrt{\xi}
    + 2 \prn+(\frac{1}{d} {+} \frac{1}{n}) \nrm{\A}^2 \xi
    }
  \leq e^{-\xi},
\end{equation*}
and the exact same bound with $Z'_{j,k}$ replacing $Z_{j,k}$.
Recalling $d \leq n$, we have
\begin{multline*}
   d \prn+(\frac{1}{d} {+} \frac{1}{n}) \fnrm{\A}^2
    + 2 \sqrt{d \prn+(\frac{1}{d} {+} \frac{1}{n})^2 \sum_{i=1}^r  \sigma_i^4 }  \sqrt{\xi}
    + 2 \prn+(\frac{1}{d} {+} \frac{1}{n}) \nrm{\A}^2 \xi \\
    \begin{aligned}
      \leq & 2 \fnrm{\A}^2 + \frac{4}{\sqrt{d}} \nrm{ \A } \fnrm{ \A } \sqrt{\xi} + \frac{4}{d}\nrm{ \A }^2 \xi \\
      \leq & 2 \fnrm{\A}^2 + \frac{4}{\sqrt{d}} \fnrm{ \A }^2 \sqrt{\xi} + \frac{4}{d}\nrm{ \A }^2 \xi.
    \end{aligned}
\end{multline*}
Therefore,
\begin{equation*}
  \Prob \prn+{
    \sum_{j=1}^d \sum_{k=1}^r  (Z)^2_{j,k} \geq
2 \fnrm{\A}^2 + \frac{4}{\sqrt{d}} \fnrm{ \A }^2 \sqrt{\xi} + \frac{4}{d}\nrm{ \A }^2 \xi}  
  \leq e^{-\xi}.
\end{equation*}
Applying this bound with the particular value $\xi = d$ to both $Z$ and $Z'$, and taking a union bound, we have that with probability at most $2 e^{-d}$,
\begin{align}
2 f(\X_0, \Y_0) &\geq \fnrm{ \A }^2 - \frac{20\nu}{9} \fnrm{ \A }^2 \nonumber \\
&\geq (1 - 2.3\nu) \fnrm{ \A }^2,
\end{align}
finishing the proof
}

\section{Proof of  \cref{thm:big}}
\label{sec:proof4}

Set $\beta_1 = \beta$ as in \cref{b:define}. Set ${f_0}_{(1)} = f_0.$

   By \cref{cor:main},  iterating \cref{ass:agd} for $T_1 = \lfloor \frac{\beta_1}{8 \eta^2 {f_0}_{(1)}} \rfloor$ iterations with step-size 
\begin{displaymath}
   \eta \leq \frac{\beta_1}{\sqrt{32 {f_0}_{(1)} \log(1/\epsilon)}} 
 \end{displaymath}
 guarantees that 
   \begin{align}
\frac{1}{2} \fnrm{ \A - \X_{T_1} \Y_{T_1}' }^2 &\leq {f_0}_{(2)} := \epsilon {f_0}_{(1)}; \nonumber \\
\nrm{ \sigma_r(\X_{T_1}) }^2 &\geq \frac{1}{4}  \frac{\beta_1}{\eta}. \nonumber 
\end{align}
This means that at time $T_1$, we can restart the analysis, and appeal again to  \cref{prop:big} with modified parameters
\begin{itemize}
    \item $f(\X_{T_1}, \Y_{T_1}) \leq {f_0}_2 := \epsilon {f_0}_1,$
    \item $\beta_2 := \frac{\beta_1}{4}.$
\end{itemize}
\Cref{cor:main} again guarantees that provided 
\begin{equation}
    \label{eps}
\eta \leq \frac{\beta_2}{\sqrt{32 {f_0}_{(2)} \log(1/\epsilon)}} 
= \frac{1}{4\sqrt{\epsilon}} \frac{\beta_1}{\sqrt{32 {f_0}_{(1)} \log(1/\epsilon)}}
\end{equation}
then $f(\X_{T_1 + T_2}, \Y_{T_1 + T_2}) \leq \epsilon f(\X_{T_1}, \Y_{T_1}) \leq \epsilon^2 f(\X_{0}, \Y_{0}) $ where 
\begin{align}
T_2 &= \frac{T_1}{4\epsilon} .
\end{align}
We have that
\cref{eps} is satisfied by assumption as we assume $\epsilon \leq \frac{1}{16}.$  Repeating this inductively, we find that after
$T = T_1 + \dots + T_k = T_1 \sum_{\ell=0}^{k-1} (\frac{1}{4\epsilon})^{\ell}  \leq T_1 (\frac{1}{4\epsilon})^{k}$ iterations, we are guaranteed that
$f(\X_T, \Y_T) \leq \epsilon^k f(\Xi, \Yi).$
This is valid for any $T \in \mathbb{N}$ because we may always apply \cref{prop:big} in light of summability and $C \geq 8$: for any $t$,
\begin{equation*}
\begin{aligned}[b]
    \sigma_1(\X_t) &\leq \sigma_1(\Xi) + \sqrt{\eta} \sum_{j=1}^k \sqrt{2 T_k {f_0}_{(k)}} \nonumber \\
  &\leq \frac{3}{8 \sqrt{\eta}} + \frac{1}{\sqrt{\eta}} \sum_{j=1}^k \sqrt{2 (1/(4 \epsilon))^j \frac{\beta_1}{8  {f_0}_1} \epsilon^j {f_0}_{(1)}} \nonumber \\
  &\leq \frac{3}{8 \sqrt{\eta}} + \frac{\sqrt{\beta}}{2\sqrt{\eta}} \sum_{j=1}^k(1/2)^j \nonumber \\
  &\leq \frac{3 + 4 \sqrt{\beta}}{8 \sqrt{\eta}}  \leq \frac{1}{2 \sqrt{\eta}}.
  \end{aligned}\qedhere
  \end{equation*}

\end{document}